\newcommand{\cA}{\mathcal{A}}
\newcommand{\cO}{\ensuremath{\mathcal{O}}}
\newcommand{\cX}{\ensuremath{\mathcal{X}}}
\newcommand{\cY}{\ensuremath{\mathcal{Y}}}
\newcommand{\cZ}{\ensuremath{\mathcal{Z}}}
\newcommand{\cR}{\mathcal{R}}
\newcommand{\cQ}{\mathcal{Q}}
\newcommand{\cE}{\mathcal{E}}
\newcommand{\cC}{\mathcal{C}}
\newcommand{\cM}{\mathcal{M}}
\newcommand{\cN}{\mathcal{N}}
\newcommand{\cW}{\mathcal{W}}
\newcommand{\bE}{\mathbf{E}}
\newcommand{\re}{\mathbb{R}}
\newcommand{\E}{\mathbb{E}}
\newcommand{\norm}[1]{\left\|#1\right\|}
\DeclareMathOperator*{\argmax}{arg\,max}
\DeclareMathOperator*{\argmin}{arg\,min}
\DeclareMathOperator{\polylog}{polylog}
\theoremstyle{plain}
\newtheorem{thm}{\protect\theoremname}
\theoremstyle{definition}
\newtheorem{defn}[thm]{\protect\definitionname}
\theoremstyle{plain}
\newtheorem{cor}[thm]{\protect\corollaryname}
\theoremstyle{plain}
\newtheorem{lem}[thm]{\protect\lemmaname}
\theoremstyle{plain}
\theoremstyle{plain}
\providecommand{\corollaryname}{Corollary}
\providecommand{\definitionname}{Definition}
\providecommand{\lemmaname}{Lemma}
\providecommand{\theoremname}{Theorem}
\providecommand{\assumptionname}{Assumption}
\providecommand{\propositionname}{Proposition}
\newif\ifnotes
\title{Differentially Private Worst-group Risk Minimization}
\author{Xinyu Zhou\thanks{Department of Computer Science \& Engineering, The Ohio State University, zhou.3542@buckeyemail.osu.edu} \and Raef Bassily\thanks{
Department of Computer Science \& Engineering and the Translational Data Analytics Institute (TDAI), The Ohio State University, bassily.1@osu.}\footnotemark[1]}
\begin{document}

\maketitle

\begin{abstract}

    We initiate a systematic study of worst-group risk minimization under $(\epsilon, \delta)$-differential privacy (DP). The  goal is to privately find a model that approximately minimizes the maximal risk across  $p$ sub-populations (groups) with different distributions, where each group distribution is accessed via a 
    sample oracle. We first present a new algorithm  that achieves excess worst-group population risk of $\Tilde{O}(\frac{p\sqrt{d}}{K\epsilon} + \sqrt{\frac{p}{K}})$, where $K$ is the total number of samples drawn from all groups and $d$ is the problem dimension. Our rate is nearly optimal when each distribution is observed via a fixed-size dataset of size $K/p$. 
    Our result is based on a new stability-based analysis for the generalization error. In particular, we show that $\Delta$-uniform argument stability implies $\Tilde{O}(\Delta + \frac{1}{\sqrt{n}})$ generalization error w.r.t. the worst-group risk, where $n$ is the number of samples drawn from each sample oracle. Next, we propose an algorithmic framework for worst-group population risk minimization using any DP online convex optimization algorithm as a subroutine. Hence, we give another excess risk bound of $\Tilde{O}\left( \sqrt{\frac{d^{1/2}}{\epsilon K}} +\sqrt{\frac{p}{K\epsilon^2}} \right)$. 
    Assuming the typical setting of $\epsilon=\Theta(1)$, this bound is more favorable than our first bound in a certain range of $p$ as a function of $K$ and $d$. Finally, we study differentially private worst-group \emph{empirical} risk minimization in the offline setting, where each group distribution is observed by a fixed-size dataset. We present a new algorithm with nearly optimal excess risk of $\Tilde{O}(\frac{p\sqrt{d}}{K\epsilon})$. 
\end{abstract}

\section{Introduction}
Multi-distribution learning has gained increasing attention due to its close connection to robustness and fairness in machine learning. Consider $p$ groups (or, sub-populations), where each group is associated with some unknown data distribution. In the multi-distribution learning paradigm, the learner tries to find a model that minimizes the maximal population risk across all groups. Let $D_i$ denote distribution of group $i \in [p]$ and $\ell:\cW\times \cZ$ be a loss function over the parameter space $\cW$ and data domain $\cZ$, our objective of minimizing the worst-group population risk can be formulated as a minimax stochastic optimization problem written as
\begin{equation}
    \min_{w\in \cW} \max_{i\in [p]} \left\{L_{D_i}(w)\coloneqq\E_{z\sim D_i} \ell(w, z)\right\} \label{eq:worst-group loss in intro}
\end{equation}
When each group distribution $D_i$ is observed by a dataset $S_i$, we also define the worst-group empirical risk minimization problem as
\begin{equation}
    \min_{w\in \cW} \max_{i\in [p]} \left\{L_{S_i}(w)\coloneqq\frac{1}{|S_i|} \sum_{z\sim S_i} \ell(w, z)\right\} \label{eq:worst-group empirical loss in intro}
\end{equation}

The worst-group risk minimization problems  \eqref{eq:worst-group loss in intro} and \eqref{eq:worst-group empirical loss in intro} have wide applications in a variety of learning scenarios. In the regime of robust learning, the objective represents a class of problems named as group distributionally robust optimization (Group DRO) \cite{soma2022optimal, zhang2023stochastic, sagawa2019distributionally}. This formulation also captures the mismatch between distributions from different domains 
(or, sub-populations). 
In this type of scenarios, instead of assuming a fixed target distribution, this formulation can be used to find a model that can work well for any possible target distribution formed as a mixture of the distributions from  different domains/sub-populations. From the perspective of learning with fairness, each group may represent a protected class or a demographic group. Minimizing the worst-group risk leads to the notions of good-intent fairness \cite{mohri2019agnostic} or min-max group fairness \cite{abernethy2022active, diana2021minimax, papadaki2022minimax}. By making sure the worst-off group performs as good as possible, the objective prevents the learner from overfitting to certain groups at the cost of others. Moreover, the objectives are also connected to other learning applications such as collaborative learning \cite{blum2017collaborative} and agnostic federated learning \cite{mohri2019agnostic} where the output model is optimized to perform well across multiple distributions. 

Meanwhile, machine learning algorithms typically depend on large volumes of data, which may pose serious privacy risk, particularly in certain privacy-sensitive applications like healthcare and finance. Therefore, it is important that we provide a strong and provable privacy guarantee for such algorithms to ensure that the sensitive information always remains private during the learning process. Although there has been a significant progress in understanding worst-group risk minimization in the non-private setting \cite{soma2022optimal, haghtalab2022demand, zhang2023stochastic}, this problem has not been formally studied in the context of learning with provable privacy guarantees. 

Motivated by this, in this paper, we initiate a formal study of worst-group risk minimization under $(\epsilon, \delta)$-differential privacy (DP). We consider the setting where the loss function is convex and Lipschitz over a compact parameter space $\cW\subset \re^d$. For the sample access model, each group distribution is accessed via a sample oracle and the learner is able to sample new data points from any group distribution via its sample oracle as needed during the learning process.

\subsection{Contribution}
As we mentioned earlier, we let $p$ denote the number of groups , 
$K$ denote the total number of samples drawn from all groups and $d$ denote the problem dimension. 

We first provide a new algorithm adapted from the phased-ERM approach in \cite{feldman2020private} with the excess worst-group population risk of  $\Tilde{O}\left(\sqrt{\frac{p}{K}} +\frac{p\sqrt{d}}{K\epsilon}\right)$. 
The first term in this bound matches the optimal non-private bound \cite{soma2022optimal} and the second term represents the cost of privacy. Our rate is optimal up to logarithmic factors in the offline setting where each distribution is observed via a a fixed-size dataset of size $K/p$.  The utility guarantee of our algorithm relies on a new stability based argument to establish the generalization error bound. Roughly speaking, we show that $\Delta$-\emph{uniform argument stability} 
(see Definition~\ref{def:stable}) with respect to the parameter $w$ 
implies a $\Tilde{O}(\Delta + 1/\sqrt{n})$ generalization error where $n$ is the number of samples drawn from each sample oracle. This is distinct from existing generalization results based on uniform convergence \cite{mohri2019agnostic, abernethy2022active}, which lead to suboptimal rates for general convex losses. On the other hand, our result also circumvents a known bottleneck in deriving generalization guarantees using argument uniform stability  in the stochastic saddle-point problems. In particular, in the $L_2/L_2$ setting, the best known generalization guarantee for a $\Delta$-uniform argument stable algorithm is $\sqrt{\Delta}$ generalization error based on the analysis of the strong duality gap \cite{ozdaglar2022good}. 

Our algorithm is akin to the Phased ERM approach of \cite{feldman2020private}, which was introduced to solve private stochastic convex optimization. 
We repurpose this approach  for the private stochastic minimax optimization. In particular, we define a sequence of regularized minimax problems, which are solved iteratively using any generic non-private solver. Our version of this algorithm requires different tuning of parameters across iterations and, more crucially, a new analysis that utilizes the uniform argument stability of the solutions of the regularized minimax problems.

As a side product, our stability result naturally leads to a new non-private regularization based method for the worst-group risk minimization problem, 
which matches the non-private lower bound in \cite{soma2022optimal} up to logarithmic factors. This may be of independent interest because existing non-private methods with (nearly) optimal rate for this problem rely on single-pass first-order methods rather than regularization. 

 
Next, we give a framework 
to minimize the worst-group population risk using a black-box access to any DP online convex optimization (DP OCO) algorithm as a subroutine. Our framework leverages the idea in \cite{soma2022optimal, haghtalab2022demand} on casting the worst-group risk minimization objective in a zero-sum game and decomposing the excess worst-group risk into the expected regret of both players. By instantiating our framework with the DP-FTRL algorithm from \cite{kairouz2021practical}, we obtain another bound on the excess worst-group population risk 
of $\Tilde{O}\left( \sqrt{\frac{d^{1/2}}{\epsilon K}} +\sqrt{\frac{p}{K\epsilon^2}} \right)$. Assuming the typical setting for the privacy parameter $\epsilon$, namely, $\epsilon = \Theta(1)$, this bound is more favorable than our bound based on the Phased ERM approach in a certain range of $p$ as a function of $K$ and $d$, particularly, when $p \geq \max\{\sqrt{d}, K/d\}$ or $\sqrt{\frac{K}{d^{1/2}}}\leq p\leq \sqrt{d}$. 

Finally, we investigate the worst-group \emph{empirical} risk minimization problem \eqref{eq:worst-group empirical loss in intro} 
 in the \emph{offline setting}, where each distribution $D_i$ is observed by a fixed-size dataset $S_i$. 
We present a new algorithm based on a private version of the multiplicative group reweighing scheme \cite{abernethy2022active, diana2021minimax}. We show that our algorithm achieves a nearly optimal excess worst-group empirical risk of $\Tilde{O}\left(\frac{p\sqrt{d}}{K\epsilon}\right)$.  



\subsection{Related Work}
The worst-group risk minimization is closely related to the group DRO problem. In \cite{sagawa2019distributionally}, the authors consider solving the empirical objective in the offline setting and propose an online algorithm based on the stochastic mirror descent (SMD) method from \cite{nemirovski2009robust}. The convergence rate achieved in \cite{sagawa2019distributionally} is suboptimal because of the high variance of the gradient estimators. On the other hands, several works \cite{haghtalab2022demand, soma2022optimal, zhang2023stochastic} has studied the group DRO problem in the stochastic oracle setting and achieves an excess worst-group population 
risk of $\Tilde{O}\left(\sqrt{\frac{p}{K}}\right)$
The basic idea behind the methods used in these works is to cast the objective into a zero-sum game based on the observation that the optimality gap depends on the expected regret of the max and min players. The authors in \cite{soma2022optimal} demonstrate the tightness of this rate by proving a matching lower bound result. 

The worst-group risk minimization problem is also extensively studied under the notion of min-max group fairness. Reference \cite{diana2021minimax} studies learning with min-max group fairness and propose a multiplicative reweighting based method. However, their method assumes the access to a weighted empirical risk minimization oracle which may not be realistic in practice. Meanwhile, reference \cite{abernethy2022active} studies a similar problem and present more efficient gradient based methods to achieve the min-max group fairness based on the idea of active sampling. The problem of min-max group fairness is also studied in \cite{mohri2019agnostic} under the federated learning setting. However, all these works do not provide any privacy guarantees.

Our work is also related to the line of works on differentially private stochastic saddle point (DP-SSP) problem. References \cite{yang2022differentially, zhang2022bring} have presented algorithms with the optimal rate for the weak duality gap, while the optimal rate for the strong duality gap is achieved in \cite{bassily2023differentially}. However, these works only consider the objective with $L_2/L_2$ geometry, directly applying the results from existing DP-SSP works leads to a suboptimal convergence rate due to the polynomial dependence of $p$ on the $L_2$ Lipschtiz constant for the model parameters. Moreover, extending the recursive regularization method in \cite{bassily2023differentially} to the $L_2/L_1$ setting is not straightforward because the given analysis explicitly requires both primal and dual parameters lie in Hilbert spaces. In a concurrent and independent work \cite{gonzalez2022optimal}, the authors study DP-SSP problem in the $L_1/L_1$ geometry over polyhedral feasible sets. They propose a method based on a combination of stochastic mirror descent and the exponential mechanism, and show that it attains nearly dimension-independent convergence rate for the expected duality gap. We note that their problem setting is different from the $L_2/L_1$ setting considered in this work.  In particular, their method takes advantage of the polyhedral geometry under $L_1$ constraints. Consequently, their construction and results are incomparable to ours.


\section{Preliminaries}\label{sec:prelims} 
We consider $p$ groups, where each group $i\in [p]$ corresponds to a distributions $D_i$. Given a parameter space $\mathcal{W}\in \cR^d$ and a loss function $\ell$, we let $\ell(w, z)$ be the loss incurred by a weight vector $w$ on a data point $z$. We assume the $L_2$ diameter of $\cW$, $\norm{\cW}_2$, is bounded by $M$. Throughout this paper, we assume that the loss function $\ell$ is convex, $L$-Lipschitz and bounded by $[0, B]$ unless stated otherwise .
The population loss of distribution $D_i$ is written as $L_{D_i}(w) = \E_{z\sim D_i} \ell(w, z)$. Given a dataset $S_i$, we also denote $L_{S_i}(w)=\frac{1}{|S_i|}\sum_{z\in S_i} \ell(w, z)$ as the empirical loss evaluated on $S_i$.  

\paragraph{Sample oracle:} A sample oracle $\cC_i$ w.r.t distribution $D_i,$ $i\in [p]$ returns an i.i.d sample $z$ drawn from $D_i$. We denote the collection of sample oracles from all groups as $\tilde{\cC} = \{\cC_1, \dots \cC_p\}$.

\paragraph{$\alpha$-saddle point:} We say $(\tilde{x}, \tilde{y})$ is an $\alpha$-saddle point of a minimax problem $\min_{x\in \cX} \max_{y\in \cY} \phi(x, y)$ if $\phi(\tilde{x}, y) - \phi(x, \tilde{y}) \leq \alpha$ for any $x\in \cX$ and $y\in \cY$. When $\alpha=0$, we simply call $(\tilde{x}, \tilde{y})$ a saddle point of $\phi$.

\paragraph{Worst-group population risk minimization}: Given $p$ groups that each of them is with a distribution $D_i$, we define the expected (population) worst-group risk of model $w$ as the maximal population loss of $w$ across all group distributions written as
\begin{equation*}
    R(w) = \max_{i\in [p]} L_{D_i}(w)
\end{equation*}

Our objective of minimizing the worst-group risk is therefore written as
\begin{equation}
    \min_{w\in \mathcal{W}}\max_{i\in [p]} L_{D_i}(w) \label{eq:agnostic loss}
\end{equation}

Denote $\Delta_p = \{\lambda \in [0,1]^p : \norm{\lambda}_1 = 1\}$ as the probability simplex over $p$ dimensions and $\phi(w, \lambda) = \sum_{i=1}^p \lambda_i L_{D_i}(w)$.

Then the objective in \eqref{eq:agnostic loss} can be equivalently written as 
\begin{equation}
    \min_{w\in \cW} \max_{\lambda\in \Delta_p} \phi(w, \lambda)  \label{eq:agnostic loss 2}
\end{equation}
The equivalence is based on the fact that for any $w\in \cW$, $\phi(w, \lambda)$ is maximized by a $\lambda$ with all dimensions being zero except the one with the highest $L_{D_i}(w)$. We also define the \emph{excess worst-group population risk} of $w\in \cW$ as
\begin{equation*}
    \cE(w; \{D_i\}_{i=1}^p)\triangleq \max_{i\in [p]} L_{D_i}(w) - \min_{\tilde{w}\in \cW} \max_{i\in [p]} L_{D_i}(\tilde{w})
\end{equation*}

\paragraph{Differential Privacy \cite{dwork2006calibrating}:} A mechanism $\cM$ is $(\epsilon, \delta)$-DP if for any two neighboring datasets $S$ and $S'$ that differ on single \emph{data point} and any output set $\cO$, we have
\begin{equation*}
    P\left(M(S)\in \cO\right) \leq e^\epsilon P \left(M(S')\in \cO\right) + \delta
\end{equation*}


We focus on record-level privacy throughout the paper. We say that a pair of sequences of sampled data points $\{z_1, \dots z_T\}$ and $\{z_1', \dots z'_T\}$ are neighbors if, for some $j\in [p]$, $z_i = z'_i$ for all $i\in [T]\setminus\{j\}$ and $z_j \neq z'_j$.


\paragraph{Stability:} Our analysis depends on the notion of uniform argument stability defined as follows
\begin{defn}\label{def:stable}
    A randomized algorithm $\cA$ that takes data collections $\tilde{S}$ as input and outputs $(\cA_w(\tilde{S}), \cA_\lambda(\tilde{S}))$ is said to attain $\Delta$-uniform argument stability with respect to $w$ if for any pairs of adjacent $\tilde{S}$ and $\tilde{S}'$ that differs  in a 
    single data point, we have
    \begin{equation*}
        \E_\cA \left[\norm{\cA_w(\tilde{S}) - \cA_w(\tilde{S'})}\right]\leq \Delta
    \end{equation*}
\end{defn}

\section{Minimax Phased ERM} \label{sec:minimax phased ERM} 

In this section, we propose an algorithm attaining an excess worst-group population risk of $\tilde{O}\left(\sqrt{\frac{p}{K}} + \frac{p\sqrt{d}}{K\epsilon}\right)$, where $p$ and $K$ denote the number of groups and the total number of samples drawn from all groups, respectively. Our algorithm involves iteratively solving a sequence of regularized minimax objectives. 
In iteration $t$, given the previous iterate $w_{t-1}$ and a dataset collection $\{S_i^t\}_{i=1}^p$ where $S_i^t$ is the dataset formed by the data points drawn from the sample oracle $\cC_i$ at current iteration, we find an approximate saddle point $(\tilde{w}_t, \tilde{\lambda}_t)$ of the regularized objective 
$F_t(w, \lambda) = \sum_{i=1}^p \lambda_i L_{S^t_i}(w) + \mu_w^t \norm{w-w_{t-1}}_2^2 - \mu_\lambda^t \sum_{j=1}^p \lambda_j \log \lambda_j $ 
and output  $\tilde{w}_t$. We can show  that  $\tilde{w}_t$ has low sensitivity w.r.t. the collection of samples drawn from all groups. 
, and hence, we can use the standard Gaussian mechanism to to ensure that the computation of $\tilde{w}_t$ is differentially private. 

Proving the utility guarantee of our algorithm involves new stability based argument. First, given a dataset collection $\tilde{S}=\{S_1, \dots S_p\}$ where each $S_i\sim D_i^n$ is a dataset formed by $n$ samples drawn from the sample oracle $\cC_i$. 
we show that an algorithm that outputs an approximate saddle point $(\tilde{w}, \tilde{\lambda})$ for the regularized objective 
$F(w, \lambda) = \sum_{i=1}^p \lambda_i L_{S_i}(w) + \frac{\mu_w}{2} \norm{w- w'}_2^2 - \mu_\lambda \sum_{j=1}^p \lambda_j \log \lambda_j$
has 
uniform argument stability 
of $O\left(\frac{L}{n\mu_w} + \frac{B}{n\sqrt{\mu_w \mu_\lambda}}\right)$ with respect to $\Tilde{w}$. Then we prove that $\Delta$-uniform argument stability of $\tilde{w}$ leads to a $\tilde{O}\left(\Delta + \frac{B}{\sqrt{n}}\right)$ generalization error bound. 

Our stability-based generalization argument is distinct from existing generalization analysis based on uniform convergence \cite{mohri2019agnostic, abernethy2022active}, which leads to sub-optimal excess worst-group risk for general convex losses even in the non-private regime. On the other hand, our result circumvents a known bottleneck in the relationship between stability and generalization in stochastic minmax optimization (also, known as Stochastic Saddle Point (SSP) problems). In particular, in the $L_2/L_2$ setting of the SSP problem, the best known generalization guarantee using a $\Delta$-argument stable algorithm is $\sqrt{\Delta}$ generalization error based on the analysis of the strong duality gap \cite{ozdaglar2022good}.

\subsection{Algorithm description}
Our algorithm relies on a non-private subroutine to solve a regularized minimax problem. More concretely, let $\tilde{S}=\{S_1,\dots S_p \}$ be a collection of datasets. Let $\mu_w$, $\mu_\lambda >0$ and $w'\in \cW$. Let $\cA_{emp}(\tilde{S}, \mu_w, \mu_\lambda, w', \alpha)$ be an empirical minimax solver that computes an $\alpha$-saddle point $(\Tilde{w}, \Tilde{\lambda})$ of 
\begin{equation} 
    \min_{w\in \cW} \max_{\lambda\in \Delta_p} \Bigg\{F(w, \lambda) \label{eq:regularized objective}\coloneqq \sum_{i=1}^p \lambda_i L_{S_i}(w) + \frac{\mu_w}{2} \norm{w- w'}_2^2 - \mu_\lambda \sum_{j=1}^p \lambda_j \log \lambda_j \Bigg\}
\end{equation}
and outputs $\tilde{w}$. 

The formal description of our algorithm is given in Algorithm \ref{alg:Minimax Phased ERM}. Our algorithm is inspired by the Phased ERM approach of \cite{feldman2020private}, which was used to attain the optimal rate for the simpler problem of differentially private stochastic convex optimization. We repurpose the Phased ERM approach for the stochastic minimax problem. Our algorithm involves several modifications to this approach that enable us to attain strong guarantees for the worst-group population risk minimization problem. 
Crucially, we provide a new analysis for our algorithm that utilizes the argument uniform stability of the regularized minimax problems (see, Lemma \ref{lem: expected excess loss of hat_w}). 

In Algorithm~\ref{alg:Minimax Phased ERM}, a sequence of  regularized empirical minimax problems are defined and solved iteratively using the non-private minimax solver $\cA_{emp}$ (defined earlier in this section). The total number of such problems (the number of rounds in the algorithm) $T$ is logarithmic in $K/p$. 
In round $t\in [T]$, we first sample a dataset of size $n_t$ from each distribution to construct a new dataset collection $\tilde{S}_t$. The center of the regularization term is chosen to be $w' = w_{t-1}$, i.e., the previous iterate. The settings of the regularization parameters in each round are carefully chosen to ensure convergence to a nearly optimal rate. The regularization parameter $\mu_\lambda$ is fixed across all rounds whereas $\mu_w$ is doubled with every round. 
Our main result is stated below. 


\begin{algorithm}[h]
\caption{Minimax Phased ERM}
\label{alg:Minimax Phased ERM}
\begin{algorithmic}
\State {\bfseries Input} Sample oracles $\{\cC_1,\dots \cC_p\}$, step size $\eta$
, $D=\max \{L, B\}$, total number of samples drawn from all groups $K$, non-private empirical solver $A_{emp}$.
\State {\bfseries Initialize}  $w_0$ arbitrary parameter vector in $\cW$.
\State {Set $n = K/p$ ~and~ $T = \log_2(n)$.}
\For{$t = 1,\dots, T$}
    \State Let $n_t = n/T,$ 
    $\eta_t = \eta 2^{-t}$, $\mu_w^t = 1/(\eta_t n_t)$, $\mu_\lambda^t = 1/(\eta n )$. 
    \State For each $i\in [p]$, sample $S_i^t\sim D_i^{n_t}$ using the sample oracle $\cC_i$. 
        Denote $\tilde{S}^t = \{S_1^t, \dots S_p^t\}$.
    \State Let $\alpha_t = \frac{L^2}{8n_t^2 \mu_w^t} + \frac{B^2}{8n_t^2\mu_\lambda^t}$ .
    \State Let $\tilde{w}_t = \cA_{emp}(\tilde{S}^t, \mu_w^t, \mu_\lambda^t, w_{t-1}, \alpha_t)$. \\
    \State Set $w_t = \tilde{w_t} + \xi_t$ where $\xi_t\sim \cN(0, \sigma_t I_d)$ and $\sigma_t = 6D\sqrt{2\log_2(n) \log(1/\delta) \eta \eta_t}/\epsilon$.
\EndFor
\State {\bfseries Output:} $w_T$.
\end{algorithmic}
\end{algorithm}

\begin{thm} \label{thm: expected worst-group excess risk}
     Let $\eta = \frac{M}{D} \min \left\{\frac{\epsilon}{ \sqrt{72d \log (\frac{K}{p}) \log(\frac{1}{\delta})}}, \frac{\sqrt{p}}{\log^{\frac{3}{4}} (K) \sqrt{K}} \right\}$. Algorithm \ref{alg:Minimax Phased ERM} is $(\epsilon, \delta)$-differentially private and 
    we have
    \begin{align*}
    &\E\left[\max_{i\in [p]} L_{D_i}(w_T)\right] - \min_{w\in \cW} \max_{i\in [p]} L_{D_i}(w) \\
    &=O\left(MD\left(\log^{\frac{11}{4}} (K)\sqrt{\frac{ p}{K} }
     + \frac{\log^{\frac{5}{2}} (K)p\sqrt{d \log (1/\delta)}}{K\epsilon}\right)\right),
    \end{align*}
    where $w_T$ is the output of Algorithm \ref{alg:Minimax Phased ERM} and the expectation is taken over the sampled data points and the algorithm's inner randomness. 
\end{thm}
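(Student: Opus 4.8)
The plan is to handle privacy and utility separately, reducing each to a per-round statement and then stitching the rounds together through a proximal-point telescoping. For privacy, I would first bound the worst-case $L_2$-sensitivity of the map from the sampled data to the approximate saddle point $\tilde w_t$. Since $F_t$ is $\mu_w^t$-strongly convex in $w$ and, through the negative-entropy term, $\mu_\lambda^t$-strongly concave in $\lambda$, its saddle point is unique and Lipschitz in the data; changing one record in one group perturbs the $w$-coordinate of the exact saddle point by $O\!\big(L/(n_t\mu_w^t)+B/(n_t\sqrt{\mu_w^t\mu_\lambda^t})\big)$, which is exactly the uniform-argument-stability bound quoted earlier. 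The accuracy $\alpha_t$ is chosen so that $\norm{\tilde w_t-\hat w_t}\le\sqrt{2\alpha_t/\mu_w^t}$ is of the same order, so the bound survives for the approximate solution. Calibrating $\sigma_t$ to this sensitivity makes the Gaussian step in round $t$ individually $(\epsilon,\delta)$-DP — and the $\sqrt{T}=\sqrt{\log_2 n}$ inside $\sigma_t$ is carried by the $B$-term of the sensitivity, not by composition. The key observation is that two neighboring sample sequences differ in exactly one record, which is used in exactly one round $t^\ast$: all rounds before $t^\ast$ are identically distributed and everything after $t^\ast$ is post-processing of $w_{t^\ast}$ on fresh, unchanged samples. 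Hence no composition is needed and the algorithm inherits the guarantee of a single Gaussian release.

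For the per-round utility bound, write $R(w)=\max_i L_{D_i}(w)$, $\hat R_t(w)=\max_i L_{S^t_i}(w)$, and let $\hat R_t^{\mathrm{sm}}$ be the entropy-smoothed (log-sum-exp) version with $|\hat R_t^{\mathrm{sm}}-\hat R_t|\le \mu_\lambda^t\log p$, and set $w^\ast=\argmin_{w\in\cW}R(w)$. I would decompose the excess population risk of $\tilde w_t$ into (i) a generalization gap $\E[R(\tilde w_t)-\hat R_t(\tilde w_t)]$, (ii) an empirical optimization gap, and (iii) the deviation $\E[\hat R_t(w^\ast)-R(w^\ast)]$. Term (i) is controlled by the stability-to-generalization result for the worst-group risk, giving $\tilde{O}(\Delta_t+B/\sqrt{n_t})$ with $\Delta_t=O(L\eta_t+B\sqrt{T\eta\eta_t})$; term (iii) is $\tilde{O}(B/\sqrt{n_t})$ by concentration of a maximum of $p$ empirical means at the fixed point $w^\ast$; and for term (ii) the $\mu_w^t$-strong convexity of $\hat R_t^{\mathrm{sm}}+\tfrac{\mu_w^t}{2}\norm{\cdot-w_{t-1}}^2$ together with $\alpha_t$-optimality of $\tilde w_t$ gives a proximal-point inequality relating $\tilde w_t$, $w_{t-1}$ and $w^\ast$. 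Accounting for the noise through $R(w_t)\le R(\tilde w_t)+L\norm{\xi_t}$ and $\E\norm{w^\ast-w_t}^2=\E\norm{w^\ast-\tilde w_t}^2+d\sigma_t^2$ then yields the single recursion
\[
\E[R(w_t)]-R(w^\ast)+\tfrac{\mu_w^t}{2}\,\E\norm{w^\ast-w_t}^2\ \le\ \tfrac{\mu_w^t}{2}\,\E\norm{w^\ast-w_{t-1}}^2+E_t,
\]
where $E_t$ collects the generalization, smoothing ($\mu_\lambda^t\log p$), approximation ($\alpha_t$), concentration, and the two noise contributions $L\sigma_t\sqrt d$ and $\tfrac{\mu_w^t}{2}d\sigma_t^2$.

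The telescoping then exploits $\mu_w^t=2\mu_w^{t-1}$: summing the recursion over $t$ collapses the quadratic terms and gives $\sum_{t}(\E[R(w_t)]-R(w^\ast))\le \tfrac{\mu_w^1}{2}M^2+\sum_t E_t$, and using the (approximate) monotonicity of the excess risk along proximal steps the last iterate $w_T$ is no worse than the average, so $\E[R(w_T)]-R(w^\ast)\le \tfrac1T\big(\tfrac{\mu_w^1}{2}M^2+\sum_t E_t\big)$. Substituting the schedule, $\tfrac1T\cdot\tfrac{\mu_w^1}{2}M^2=\tilde{O}(M^2/(\eta n))$ produces the statistical term, while the privacy contribution comes from the cross term $\tfrac{\mu_w^t}{2}d\sigma_t^2$, which is constant in $t$ and equals $\tilde{O}(dD^2\eta\log(1/\delta)/(n\epsilon^2))$; the remaining entries of $E_t$ are lower order. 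Choosing $\eta$ as the stated minimum balances these: $\eta_1$ is exactly the value keeping the accumulated distance noise $\sum_t d\sigma_t^2$ at $O(M^2)$ (so $\E\norm{w^\ast-w_t}^2=O(M^2)$ throughout), while $\eta_2$ optimizes the statistical term; with $n=K/p$ and $T=\log_2 n$ one finds $\tilde{O}(M^2/(\eta_2 n))=\tilde{O}(MD\sqrt{p/K})$ and $\tilde{O}(dD^2\eta_1\log(1/\delta)/(n\epsilon^2))=\tilde{O}(MD\,p\sqrt{d\log(1/\delta)}/(K\epsilon))$, giving the claimed bound after collecting logarithms.

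I expect the crux to be term (i): showing that $\Delta$-uniform argument stability of $\tilde w_t$ yields an $\tilde{O}(\Delta+B/\sqrt n)$ generalization bound for the worst-group (minimax) risk rather than the $\sqrt{\Delta}$ that a strong-duality-gap argument would give. Keeping the dependence on $\Delta$ linear while handling the $\max$ over groups (equivalently the entropy-smoothed dual) is the delicate conceptual step and is what separates this analysis from existing uniform-convergence and SSP-stability approaches. Secondarily, the telescoping must be carried out with the increasing regularization, the approximate saddle points, and the injected noise treated simultaneously, and the passage from the averaged guarantee to the last-iterate output $w_T$ relies on the near-monotonicity of the proximal trajectory, which must be made quantitative in the presence of these errors.
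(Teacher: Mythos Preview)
Your privacy argument and your identification of the crux (linear-in-$\Delta$ generalization for the worst-group risk via uniform argument stability) are both on the mark and match the paper. The utility argument, however, has a genuine gap in the telescoping step.

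With $\mu_w^t=2\mu_w^{t-1}$, the quadratic terms in your recursion do \emph{not} collapse. Summing
\[
a_t+\tfrac{\mu_w^t}{2}b_t\le \tfrac{\mu_w^t}{2}b_{t-1}+E_t,\qquad a_t:=\E[R(w_t)]-R(w^\ast),\ b_t:=\E\norm{w^\ast-w_t}^2,
\]
over $t$ leaves a positive residual $\sum_{t=1}^{T-1}\tfrac{\mu_w^{t+1}-\mu_w^t}{2}b_t=\sum_{t=1}^{T-1}\tfrac{\mu_w^t}{2}b_t$. Since $b_t$ is only $O(M^2)$ (not geometrically decaying under your recursion), this residual is of order $\mu_w^T M^2\asymp M^2 T/\eta$; after dividing by $T$ you get $M^2/\eta$, which is off by a factor of $n$ from the $\tilde O(M^2/(\eta n))$ you claim. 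Your subsequent passage from the average to the last iterate via ``approximate monotonicity of proximal steps'' is also unjustified: with doubling regularization and injected Gaussian noise there is no such monotonicity available.

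The paper avoids both issues by a different telescoping: it applies the per-round excess-risk bound (your decomposition (i)--(iii), equivalently Lemma~\ref{lem: expected excess loss of hat_w}) with the comparison point $w=\tilde w_{t-1}$ rather than $w=w^\ast$. Since $w_{t-1}=\tilde w_{t-1}+\xi_{t-1}$, the regularization cost in round $t$ becomes $\tfrac{\mu_w^t}{2}\norm{\tilde w_{t-1}-w_{t-1}}^2=\tfrac{\mu_w^t}{2}\norm{\xi_{t-1}}^2$, and the schedule is tuned so that $\E\norm{\xi_{t-1}}^2\le 2^{-(t-1)}M^2$, making $\tfrac{\mu_w^t}{2}\E\norm{\xi_{t-1}}^2=O(M^2\log n/(\eta n))$ uniformly in $t$. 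Setting $\tilde w_0:=w^\ast$, the sum $\sum_{t=1}^T\E[R(\tilde w_t)-R(\tilde w_{t-1})]=\E[R(\tilde w_T)]-R(w^\ast)$ telescopes in function values, directly yielding the last-iterate bound with no monotonicity needed; a final $L\,\E\norm{\xi_T}$ converts from $\tilde w_T$ to $w_T$. If you switch your comparison point from $w^\ast$ to $\tilde w_{t-1}$, the rest of your plan goes through essentially as written.
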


\paragraph{Remarks:}
\begin{itemize}[leftmargin=*, topsep=0pt]
     \item 
     One can show that the rate in Theorem \ref{thm: expected worst-group excess risk} is nearly optimal in the offline setting considered in Section \ref{sec:offline setting}, where each distribution is observed by a fixed-size dataset of size $K/p$. A lower bound instance of $\Omega\left( \frac{p\sqrt{d}}{K\epsilon} + \sqrt{\frac{p}{K}} \right)$ can be constructed in this case. More details are given in Appendix \ref{sec:lower bound instances}. 
     Also, note that when $d = O(K/p)$, the rate in Theorem \ref{thm: expected worst-group excess risk} matches the the non-private lower bound $\Omega\left(\sqrt{\frac{p}{K}}\right)$ up to logarithmic factors. 

    \item In Appendix \ref{sec: non-private sc-sc algo}, we give an instantiation of $\cA_{emp}$, which is an iterative algorithm for solving the regularized minimax problem in \eqref{eq:regularized objective} with a convergence rate of $\tilde{O}(1/N)$, where $N$ is the number of iterations. 
\end{itemize}

The privacy and utility guarantees in Theorem \ref{thm: expected worst-group excess risk} rely on a stability-based result of the saddle point of the regularized objective \eqref{eq:regularized objective}, which we present in the following lemma.

\begin{lem} \label{lem: stability lemma}
    Consider two neighboring dataset collections $\tilde{S} = \{S_1, \dots S_p\}\in \cZ^{n\times p}$ and $\tilde{S}' = \{S'_1, \dots S'_p\}\in \cZ^{n \times p}$ that differs in a single data point. Let $(\tilde{w}, \tilde{\lambda})$ and $(\tilde{w}', \tilde{\lambda}')$ be an $\alpha$-saddle point of $F(w, \lambda)$ in \eqref{eq:regularized objective}   when the dataset collections  are $\tilde{S}$ and $\tilde{S}',$ respectively. By letting $\alpha \leq \frac{L^2}{8n^2 \mu_w} + \frac{B^2}{8n^2\mu_\lambda}$, we have
    \begin{equation*}
        \norm{\tilde{w}-\tilde{w}'}_2 \leq \frac{3}{n}\left(\frac{L}{\mu_w} + \frac{B}{\sqrt{\mu_w \mu_\lambda}}\right)
    \end{equation*}
\end{lem}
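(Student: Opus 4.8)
The plan is to first establish the bound for the \emph{exact} saddle points of $F$ and $F'$, and then absorb the $\alpha$-approximation error by a triangle inequality. The engine of the whole argument is the strong-convex--strong-concave structure of $F$: the term $\tfrac{\mu_w}{2}\norm{w-w'}_2^2$ makes $F(\cdot,\lambda)$ $\mu_w$-strongly convex w.r.t.\ $\norm{\cdot}_2$ for each fixed $\lambda$, while the negative-entropy term $-\mu_\lambda\sum_j \lambda_j\log\lambda_j$ makes $F(w,\cdot)$ $\mu_\lambda$-strongly concave w.r.t.\ $\norm{\cdot}_1$ on $\Delta_p$ (the negative entropy is $1$-strongly convex in $\norm{\cdot}_1$ by Pinsker's inequality). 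Combining the first-order optimality of the exact saddle point $(w^\star,\lambda^\star)$ of $F$ with these moduli gives the quadratic-growth inequality
\[
 F(w,\lambda^\star)-F(w^\star,\lambda)\ \ge\ \tfrac{\mu_w}{2}\norm{w-w^\star}_2^2+\tfrac{\mu_\lambda}{2}\norm{\lambda-\lambda^\star}_1^2\qquad\forall\,(w,\lambda),
\]
and likewise for the exact saddle point $(\bar w,\bar\lambda)$ of $F'$.

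First I would bound the distance between the two exact saddle points. The single-point change lies in some group $j$, so $F(w,\lambda)-F'(w,\lambda)=\tfrac{\lambda_j}{n}\,h(w)$ with $h(w)=\ell(w,z)-\ell(w,z')$; note $|h|\le B$ and $h$ is $2L$-Lipschitz. Instantiating the growth inequality for $F$ at $(\bar w,\bar\lambda)$ and the one for $F'$ at $(w^\star,\lambda^\star)$ and adding them, the $F'$-evaluations cancel in pairs, leaving
\[
 \tfrac{\lambda^\star_j}{n}h(\bar w)-\tfrac{\bar\lambda_j}{n}h(w^\star)\ \ge\ \mu_w\norm{w^\star-\bar w}_2^2+\mu_\lambda\norm{\lambda^\star-\bar\lambda}_1^2.
\]
Splitting the left side as $\tfrac{\lambda^\star_j}{n}\bigl(h(\bar w)-h(w^\star)\bigr)+\tfrac{h(w^\star)}{n}(\lambda^\star_j-\bar\lambda_j)$ and using $\lambda^\star_j\le 1$, the Lipschitzness and boundedness of $h$, and $|\lambda^\star_j-\bar\lambda_j|\le\norm{\lambda^\star-\bar\lambda}_1$ bounds it by $\tfrac{2L}{n}\norm{w^\star-\bar w}_2+\tfrac{B}{n}\norm{\lambda^\star-\bar\lambda}_1$. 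Writing $a=\norm{w^\star-\bar w}_2$ and $b=\norm{\lambda^\star-\bar\lambda}_1$, I would apply Young's inequality to absorb the $b$-term ($\tfrac{B}{n}b\le\mu_\lambda b^2+\tfrac{B^2}{4n^2\mu_\lambda}$), cancel $\mu_\lambda b^2$, and solve the scalar quadratic $\mu_w a^2-\tfrac{2L}{n}a-\tfrac{B^2}{4n^2\mu_\lambda}\le 0$ (using $\sqrt{x+y}\le\sqrt{x}+\sqrt{y}$) to obtain $a\le\tfrac{2L}{n\mu_w}+\tfrac{B}{2n\sqrt{\mu_w\mu_\lambda}}\le\tfrac{2}{n}\bigl(\tfrac{L}{\mu_w}+\tfrac{B}{\sqrt{\mu_w\mu_\lambda}}\bigr)$.

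Next I would control each $\alpha$-saddle point against its exact counterpart. Plugging $(\tilde w,\tilde\lambda)$ into the growth inequality for $F$ at $(w^\star,\lambda^\star)$ and using that the $\alpha$-saddle property gives $F(\tilde w,\lambda^\star)-F(w^\star,\tilde\lambda)\le\alpha$ yields $\tfrac{\mu_w}{2}\norm{\tilde w-w^\star}_2^2\le\alpha$, hence $\norm{\tilde w-w^\star}_2\le\sqrt{2\alpha/\mu_w}$, and the same for $(\tilde w',\bar w)$. Substituting the assumed $\alpha\le\tfrac{L^2}{8n^2\mu_w}+\tfrac{B^2}{8n^2\mu_\lambda}$ and again using subadditivity of $\sqrt{\cdot}$ gives $\norm{\tilde w-w^\star}_2\le\tfrac{1}{2n}\bigl(\tfrac{L}{\mu_w}+\tfrac{B}{\sqrt{\mu_w\mu_\lambda}}\bigr)$ for each. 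Finally, the triangle inequality $\norm{\tilde w-\tilde w'}_2\le\norm{\tilde w-w^\star}_2+\norm{w^\star-\bar w}_2+\norm{\bar w-\tilde w'}_2$ combines the three pieces (two copies of $\tfrac{1}{2n}(\cdots)$ and one copy of $\tfrac{2}{n}(\cdots)$) into exactly $\tfrac{3}{n}\bigl(\tfrac{L}{\mu_w}+\tfrac{B}{\sqrt{\mu_w\mu_\lambda}}\bigr)$.

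I expect the main obstacle to be the second step under the mixed $\ell_2/\ell_1$ geometry: correctly invoking the $\ell_1$ strong concavity of the entropy so that the growth inequalities for $F$ and $F'$ have \emph{matching} cross terms that cancel, and then decoupling the coupled quadratic in $(a,b)$ via Young's inequality without losing the intended dependence on $\mu_w$ and $\mu_\lambda$. Once the exact-case bound is in hand, the $\alpha$-approximation bookkeeping is routine.
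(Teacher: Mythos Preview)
Your proposal is correct and follows the same three-step skeleton as the paper: (i) bound $\norm{w^\star-\bar w}_2$ for the \emph{exact} saddle points, (ii) bound each $\alpha$-saddle point against its exact counterpart via $\sqrt{2\alpha/\mu_w}$, (iii) triangle inequality. The only difference is in step (i): the paper rewrites $\hat\phi$ as a finite sum over ``batched nodes'' $\xi^j=(z_1^j,\dots,z_p^j)$ so as to invoke Lemma~2 of \cite{zhang2021generalization} as a black box (yielding $\sqrt{\mu_w a^2+\mu_\lambda b^2}\le\tfrac{2}{n}\sqrt{L^2/\mu_w+B^2/\mu_\lambda}$), whereas you unpack that lemma directly via the strong-convex/strong-concave growth inequality and Young's inequality. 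Your route is self-contained and makes the role of the mixed $\ell_2/\ell_1$ geometry explicit; the paper's is shorter by outsourcing the SC--SC stability step to the literature. Both arrive at the same constant $\tfrac{2}{n}\bigl(\tfrac{L}{\mu_w}+\tfrac{B}{\sqrt{\mu_w\mu_\lambda}}\bigr)$ for the exact case and the same $\tfrac{1}{2n}(\cdots)$ for each approximation term, so the final $\tfrac{3}{n}(\cdots)$ is assembled identically.
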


The proof of Lemma \ref{lem: stability lemma} follows  
from Lemma 2 in \cite{zhang2021generalization} as well as the observation that $F(w,\lambda)$ can be equivalently written in the finite-sum form composed with regularization terms. We give full details in Appendix \ref{sec: proof of Lemma 3}.

Now we are ready to present our main technical lemma, which shows that an approximate saddle point of the regularized objective $F(w, \lambda)$ in \eqref{eq:regularized objective} gives a good solution to the worst-group population risk minimization problem. We will sketch the proof of this lemma and defer the full proof to Appendix \ref{sec: proof of Lemma 4}.

\begin{lem} \label{lem: expected excess loss of hat_w} Let $(\tilde{w}, \tilde{\lambda})$ be an $\alpha$-saddle point of $F(w, \lambda)$ in~\eqref{eq:regularized objective} with $\alpha \leq \frac{L^2}{8n^2 \mu_w} + \frac{B^2}{8n^2\mu_\lambda}$. For any $w\in \cW$,
\begin{align*}
    &\E\left[\max_{i\in [p]} L_{D_i}(\tilde{w})\right] - \max_{i\in [p]} L_{D_i}(w) \\
    &= O \bigg(\mu_w \norm{w- w'}_2^2 + \mu_\lambda \log p + \left(\frac{L^2}{n\mu_w} + \frac{L B}{n\sqrt{\mu_w \mu_\lambda}}\right)\log(n)\log(n p) + \frac{B\sqrt{\log(pn)}}{\sqrt{n}}\bigg),
\end{align*}
where the expectation is taken over the randomness in the datasets $\tilde{S}$. 
\end{lem}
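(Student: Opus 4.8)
The plan is to fix an arbitrary comparator $w\in\cW$ and split the excess population worst-group risk through the \emph{empirical} worst-group risk. Writing $R(u)=\max_i L_{D_i}(u)=\max_{\lambda\in\Delta_p}\phi(u,\lambda)$ and $\hat R(u)=\max_i L_{S_i}(u)=\max_{\lambda\in\Delta_p}\sum_i\lambda_i L_{S_i}(u)$, I would decompose
\[
R(\tilde w) - R(w) = \underbrace{\big(R(\tilde w)-\hat R(\tilde w)\big)}_{\text{(I) generalization}} + \underbrace{\big(\hat R(\tilde w)-\hat R(w)\big)}_{\text{(II) optimization}} + \underbrace{\big(\hat R(w)-R(w)\big)}_{\text{(III) concentration}},
\]
take expectation over $\tilde S$, and bound each term separately. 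Term (II) will produce the regularization terms $\mu_w\norm{w-w'}_2^2+\mu_\lambda\log p$, term (III) the $B\sqrt{\log(pn)/n}$ term, and term (I) the stability term $\big(\tfrac{L^2}{n\mu_w}+\tfrac{LB}{n\sqrt{\mu_w\mu_\lambda}}\big)\log(n)\log(np)$.

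For (II) I would use only the approximate saddle-point inequality, with no appeal to strong duality: setting $G(u)=\max_\lambda F(u,\lambda)$, the $\alpha$-saddle property gives $\max_\lambda F(\tilde w,\lambda)-\min_u F(u,\tilde\lambda)\le\alpha$, and since $F(u,\tilde\lambda)\le G(u)$ for every $u$ we get $\min_u F(u,\tilde\lambda)\le\min_u G(u)\le G(w)$, hence $G(\tilde w)\le G(w)+\alpha$. Because the entropy regularizer $-\mu_\lambda\sum_j\lambda_j\log\lambda_j$ takes values in $[0,\mu_\lambda\log p]$ and the proximal term $\tfrac{\mu_w}{2}\norm{u-w'}_2^2$ is nonnegative, one has $\hat R(\tilde w)\le G(\tilde w)$ and $G(w)\le\hat R(w)+\mu_\lambda\log p+\tfrac{\mu_w}{2}\norm{w-w'}_2^2$; chaining these yields $\hat R(\tilde w)-\hat R(w)\le\mu_\lambda\log p+\tfrac{\mu_w}{2}\norm{w-w'}_2^2+\alpha$, with $\alpha$ of lower order by the hypothesis $\alpha\le\tfrac{L^2}{8n^2\mu_w}+\tfrac{B^2}{8n^2\mu_\lambda}$. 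Term (III) is a fixed-$w$ concentration: using $\max_i a_i-\max_i b_i\le\max_i(a_i-b_i)$ and that each $L_{S_i}(w)-L_{D_i}(w)$ is an average of $n$ i.i.d.\ $[0,B]$-bounded centered variables (sub-Gaussian with proxy $B^2/4n$), a maximal inequality over the $p$ groups gives $\E[\hat R(w)-R(w)]\le\E[\max_i(L_{S_i}(w)-L_{D_i}(w))]\le B\sqrt{\log p/2n}$.

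The crux is term (I), where I would establish the stated stability-to-generalization statement for the worst-group risk. First reduce to the per-group gap via $R(\tilde w)-\hat R(\tilde w)\le\max_i\big(L_{D_i}(\tilde w)-L_{S_i}(\tilde w)\big)$. The key observation is that Lemma \ref{lem: stability lemma} gives a \emph{deterministic} argument-stability bound $\Delta=\tfrac{3}{n}\big(\tfrac{L}{\mu_w}+\tfrac{B}{\sqrt{\mu_w\mu_\lambda}}\big)$ under a change of any single data point of $\tilde S$; since $\ell$ is $L$-Lipschitz, for each fixed group $i$ the map $S_i\mapsto\ell(\tilde w(\tilde S),\cdot)$ with the other groups frozen is uniformly stable with parameter $\gamma:=L\Delta$. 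I would then invoke a high-probability uniform-stability generalization bound (of Feldman--Vondr\'ak / Bousquet--Klochkov--Zhivotovskiy type): conditionally on the other groups, with probability at least $1-\beta$, $L_{D_i}(\tilde w)-L_{S_i}(\tilde w)=O\big(\gamma\log(n)\log(1/\beta)+B\sqrt{\log(1/\beta)/n}\big)$. A union bound over the $p$ groups (replacing $\beta$ by $\beta/p$) controls the maximum, and integrating the tail with $\beta=1/n$ converts this into the expectation bound $\E[\max_i(L_{D_i}(\tilde w)-L_{S_i}(\tilde w))]=O\big(L\Delta\,\log(n)\log(np)+B\sqrt{\log(np)/n}\big)$; substituting $L\Delta=O\big(\tfrac{L^2}{n\mu_w}+\tfrac{LB}{n\sqrt{\mu_w\mu_\lambda}}\big)$ recovers exactly the stability and concentration terms. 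Summing (I)--(III) then yields the lemma.

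I expect the main obstacle to be term (I): correctly certifying that $\tilde w$ is uniformly loss-stable with respect to each individual group's sample — which hinges on the deterministic, not merely in-expectation, guarantee of Lemma \ref{lem: stability lemma} — and then paying only a polylogarithmic price for the maximum over the $p$ groups and for the high-probability-to-expectation conversion, so that the generalization gap stays at order $L\Delta$ rather than blowing up by the $\sqrt{p}$ or $\sqrt{n}$ factors that a naive bounded-difference (McDiarmid) argument over all $pn$ coordinates would incur.
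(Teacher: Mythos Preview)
Your proposal is correct and follows essentially the same route as the paper: both arguments hinge on Lemma~\ref{lem: stability lemma} for deterministic argument stability, the Feldman--Vondr\'ak high-probability generalization bound applied group-by-group (conditioning on the other groups), a union bound over the $p$ groups, Hoeffding-type concentration for the fixed comparator $w$, and the approximate saddle-point inequality to extract the regularization terms. The only cosmetic differences are that you decompose through $\hat R(w)=\max_i L_{S_i}(w)$ whereas the paper routes through $\hat\phi(w,\hat\lambda)$, and that you work directly with the approximate saddle point $\tilde w$ while the paper first analyzes the exact saddle point $\hat w$ and then transfers to $\tilde w$ via Lipschitzness and the bound $\|\tilde w-\hat w\|_2\le\sqrt{2\alpha/\mu_w}$; both choices lead to the same final estimate.
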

\begin{proof} (sketch)
For simplicity, we let $\alpha = 0$, which means $(\tilde{w}, \tilde{\lambda})$ is the exact saddle point of $F(w, \lambda)$. In particular, by the definition of saddle point, one can show that for any $w\in \cW$
\begin{equation*}
    \max_{i \in [p]} L_{S_i}(\tilde{w}) - \hat{\phi}(w, \tilde{\lambda}) \leq \frac{\mu_w}{2} \norm{w- w'}_2^2 + \mu_\lambda \log p
\end{equation*}
where $\hat{\phi}(w,\lambda) = \sum_{i=1}^p \lambda_i L_{S_i}(w)$.

By Lemma \ref{lem: stability lemma} and Lipschitzness, for any given $i\in [p]$, computing $\tilde{w}$ has a uniform stability of $\gamma = \frac{3}{n}\left(\frac{L^2}{\mu_w} + \frac{LB}{\sqrt{\mu_w \mu_\lambda}}\right)$ with respect to the change of datapoint in $S_i$. By Theorem 1.1 in \cite{feldman2019high}, we have with high probability over the sampling of $\tilde{S}$, 
\begin{equation*}
    |L_{S_i}(\Tilde{w}) - L_{D_i}(\tilde{w})| = \tilde{O}\left(\gamma + \frac{B}{\sqrt{n}}\right)
\end{equation*}

Then we can use union bound across all $i\in [p]$ and obtain with high probability
\begin{equation*}
    |L_{S_i}(\Tilde{w}) - L_{D_i}(\tilde{w})| = \tilde{O}\left(\gamma + \frac{B}{\sqrt{n}}\right) \quad \forall i\in [p]
\end{equation*}

In particular, we have with high probability
\begin{equation*}
    |\max_{i \in [p]} L_{D_i}(\tilde{w}) - \max_{i\in [p]} L_{S_i}(\tilde{w})| = \tilde{O}\left(\gamma + \frac{B}{\sqrt{n}}\right)
\end{equation*}

Meanwhile, we can show that with high probability, $|\phi(w, \tilde{\lambda}) - \hat{\phi}(w, \tilde{\lambda})|= \tilde{O}\left(\frac{B}{\sqrt{n}}\right)$ for any given $w\in \cW$. Finally, we put everything together and have with high probability
\begin{align*}
    &\max_{i\in [p]} L_{D_i}(\tilde{w}) - \max_{i \in [p]} L_{D_i}(w) \\
    &\leq \max_{i\in [p]} L_{D_i}(\tilde{w}) - \phi(w, \tilde{\lambda}) \\
    &\leq \max_{i\in [p]} L_{S_i}(\tilde{w}) - \hat{\phi}(w, \tilde{\lambda}) + \tilde{O} \left(\gamma + \frac{B}{\sqrt{n}}\right)  \\
    &= \tilde{O} \left(\mu_w \norm{w- w'}_2^2 + \mu_\lambda + \gamma + \frac{B}{\sqrt{n}}\right)
\end{align*}

Replacing $\gamma$ with $\frac{3}{n}\left(\frac{L^2}{\mu_w} + \frac{LB}{\sqrt{\mu_w \mu_\lambda}}\right)$ and we obtain the desired result.
\end{proof}

\paragraph{Remark:} As a side product of Lemma \ref{lem: expected excess loss of hat_w}, we can sample a new dataset collection $\tilde{S} = \{S_1,\dots S_p\}$ where $S_i\sim D_i^{K/p}$. By letting $\mu_w = \frac{D}{M}\sqrt{\frac{p}{K}}\sqrt{\log (K/p) \log (K)}$, $\mu_\lambda = D\sqrt{\frac{p\log (K/p) \log(K)}{K\log p}}$ and $w'$ be any arbitrary parameter in $\cW$, solving the regularized objective \eqref{eq:regularized objective} with $\tilde{S}$ gives an excess worst-group population risk of $\tilde{O}\left(\sqrt{\frac{p}{K}}\right)$, which matches the non-private lower bound in \cite{soma2022optimal} up to logarithmic factors. Prior methods (nearly) matching the non-private lower bound depend on making one pass over the sampled data points to directly optimize the population objective, our regularization-based method however does not require such one pass structure.

Now, we are ready to give a proof sketch of Theorem \ref{thm: expected worst-group excess risk}. The full proof is deferred to Appendix \ref{sec:proof of Theorem 2}.
\begin{proof}
    (Proof Sketch of Theorem \ref{thm: expected worst-group excess risk})
    
\paragraph{Privacy:} 
At iteration $t$, by the stability argument in Lemma~\ref{lem: stability lemma}, we have
\begin{align*}
    &\norm{\tilde{w_t} - \tilde{w_t}'}_2 \leq \frac{3L}{n_t \mu_w^t} + \frac{3B}{n_t \sqrt{\mu_w^t \mu_\lambda^t}} \\
    &= 3L \eta_t + \frac{3B \sqrt{n_t n \eta_t \eta}}{n_t} \leq 6D \sqrt{(\log_2 n) \eta_t \eta}
\end{align*}
where $\tilde{w_t}$ and $\tilde{w_t}'$ are outputs from neighboring dataset collections $\tilde{S}^t$ and $\tilde{S}^{t'}$ that differ in one datapoint. The privacy follows from the privacy guarantee of Gaussian mechanism.

\paragraph{Utility:} Recall that $R(w) = \max_{i\in [p]} L_{D_i}(w)$. By Lemma~\ref{lem: expected excess loss of hat_w}, we have
\begin{align*}
    \E[R (\tilde{w_t}) - R (\tilde{w}_{t-1})] =\tilde{O}\left( \frac{\E[\norm{\xi_{t-1}}_2^2]}{n_t \eta_t}  + \frac{1}{\eta n} + D^2\sqrt{\eta \eta_t} + \frac{B}{\sqrt{n_t}} \right)
\end{align*}
.

Also, we have 
\begin{equation*}
    \E \norm{\xi_t}_2^2 = d\sigma_t^2 = 72d D^2 \log n \log (1/\delta) 2^{-t} \eta^2 /\epsilon^2
\end{equation*}
Therefore, as long as
\begin{equation*}
    \eta \leq \frac{M\epsilon}{D \sqrt{72d \log n \log(1/\delta)}}
\end{equation*}
We will have 
\begin{equation*}
    \E \norm{\xi_t}_2^2 \leq 2^{-t} M^2 \text{ and } \E \norm{\xi_t}_2 \leq \sqrt{2}^{-t} M
\end{equation*}

Let $\tilde{w}_0 = w^*$ and $\xi_0 = w_0 - w^*$. Then $\norm{\xi_0}_2\leq M$. Hence,
\begin{align*}
    &\E [R(w_T)] - R(w^*) \\
    &= \sum_{t=1}^T  \E [R(\tilde{w}_t) - R(\tilde{w}_{t-1})] + \E [R(w_T) - R(\hat{w}_{T})] \\
    &\leq \sum_{t=1}^T \tilde{O}\left( \frac{\E[\norm{\xi_{t-1}}_2^2]}{n \eta_t}  + \frac{1}{\eta n} + D^2\sqrt{\eta \eta_t} + \frac{B}{\sqrt{n}} \right)+ L \E[\norm{\xi_T}_2] 
\end{align*}
The inequality holds since $R(\cdot)$ is $L$-Lipschitz and $n_t = n/\log_2(n)$.

We have 
\begin{equation*}
    \frac{\E[\norm{\xi_{t-1}}_2^2]}{n_t \eta_t} \leq \frac{2^{-(t-1)}M^2\log_2 n}{n 2^{-t} \eta} = \frac{2(\log_2 n) M^2}{\eta n}
\end{equation*}
and $\E[\norm{\xi_T}_2] \leq  M \sqrt{2}^{-\log_2 n} = \frac{M}{\sqrt{n}}$. 

Therefore,
\begin{align*}
    &\E [R(w_T)] - R(w^*) \\
    &\leq \sum_{t=1}^T \tilde{O}\left( \frac{M^2}{\eta n} + D^2\eta  + \frac{B}{\sqrt{n}} \right) +\frac{ML}{\sqrt{n}} \\ 
    &=\tilde{O}\left(\frac{MD}{\sqrt{n}} +  \frac{MD \sqrt{d}}{n\epsilon}\right)
\end{align*}
Replace $n$ with $K/p$ and we get the desired result.
\end{proof}
\section{Worst-group Risk Minimization using DP OCO Algorithm} \label{sec: DP OCO} 
In this section, we give a framework to directly minimize the expected worst-group loss in \eqref{eq:agnostic loss 2} using any DP online convex optimization (OCO) algorithm as a subroutine. 

Our framework leverages the idea from \cite{haghtalab2022demand,soma2022optimal, zhang2023stochastic} of casting the objective into a two-player zero-sum game. One can show that the excess expected worst-group risk is bounded by the sum of the expected regret bounds of the min-player (the $w$-player) and max-player (the $\lambda$-player) using stochastic gradient estimation. Therefore, given any DP-OCO algorithm $\cQ_-$, our frameworks proceed by letting the $w$-player run $\cQ_-$ with an estimate of the loss function $\phi(w, \lambda_t)$. Meanwhile, we instantiate the $\lambda$-player as a standard EXP3 multi-arm bandit algorithm and feed it with a privatized estimator of $\nabla_\lambda \phi(w_t, \lambda_t)$. 

An online convex optimization algorithm interacts with an adversary for $T$ rounds. In each round, the algorithm picks a vector $w_t$ and then the adversary chooses a convex loss function $\ell_t$. The performance of an OCO algorithm $\cQ$ is therefore measured by the regret defined as
\begin{equation*}
    r_\cQ(T)= \frac{1}{T} \,\E\left[\sum_{t=1}^T \ell_t(w_t) - \min_{w^*\in \mathcal{W}} \sum_{t=1}^T \ell_t(w^*)\right]
\end{equation*} 

\begin{defn}
An online algorithm $\cM$ is $(\epsilon, \delta)$-DP if for any two sequence of loss functions $S = (\ell_1, \dots \ell_T)$ and $S' = (\ell'_1, \dots \ell'_T)$ that differs in one loss function, we have 
\begin{equation*}
    P(\cM(S)\in O) \leq e^\epsilon P(\cM(S') \in O) + \delta
\end{equation*}
\end{defn}

Suppose we have a DP OCO algorithm $\cQ_-$ with regret $r_{\cQ_-}(T)$ that observes $l_t = \ell(\cdot, x_t)$ for some data point $x_t$ in each iteration, the formal description of our framework is given in Algorithm \ref{alg:on demand sampling}. We give the privacy and utility guarantee of Algorithm \ref{alg:on demand sampling} in the following theorems.  
\begin{algorithm} [h]
\setstretch{1.15}
\caption{Worst-group risk minimization using DP-OCO algorithm}
\label{alg:on demand sampling}
\begin{algorithmic}
\State {\bfseries Input} Sample oracles $\{\cC_1, \dots \cC_p\}$, DP OCO algorithm $\cQ_-$, learning rate $\eta$, privacy parameters $\epsilon, \delta$.
\State {\bfseries Initialize}  $w_1 \in \mathcal{W}$ and $\lambda_1=\left(1/p,\dots 1/p\right)\in [0,1]^p$.
\State Set $U= B + \frac{2B}{\epsilon}\log(T)$.
\For{$t = 1,\dots, T-1$}
    \State Sample $i_t \sim \lambda_t$.
    \State Sample $x_{t}^-, x_{t}^+ \sim D_{i_t}$ using the sample oracle $\cC_{i_t}$. 
    \State Update $w_{t+1} = \cQ_-(w_{1:t},  x_{t}^-)$. 
    \State Compute $\Tilde{\ell}_t = U - \ell(w_t, x_{t}^+) + y_t,$~ $y_t\sim \text{Lap}(B/\epsilon)$.
    \State Update $\lambda_{t+1, i_t} = \lambda_{t, i_t}  \text{exp}\left(\frac{-\eta\Tilde{\ell}_{t}}{\lambda_{t,i_t}}\right)$. 
    \State  Normalize $\lambda_{t+1} = \frac{\lambda_{t+1}}{\norm{\lambda_{t+1}}_1}$.
\EndFor
\State {\bfseries Output} {$\Bar{w}_T = \frac{1}{T} \sum_{t=1}^T w_t$, $\Bar{\lambda}_T = \frac{1}{T} \sum_{t=1}^T \lambda_t$.}

\end{algorithmic}
\end{algorithm}

\begin{thm}(Privacy Guarantee)
Algorithm \ref{alg:on demand sampling} is $(\epsilon, \delta)$-differentially private.
\end{thm}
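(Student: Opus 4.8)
The plan is to exploit the fact that Algorithm~\ref{alg:on demand sampling} draws two \emph{disjoint} streams of fresh samples, $\{x_t^-\}_t$ (fed to the $w$-player through $\cQ_-$) and $\{x_t^+\}_t$ (used by the $\lambda$-player to form the noisy losses $\tilde\ell_t$), and to show that any single changed record touches exactly one of these two streams directly. First I would record the structural observation that the released pair $(\bar w_T,\bar\lambda_T)$ is a (randomized) function of the $w$-iterates $w_{1:T}$ and the noisy losses $\tilde\ell_{1:T-1}$ alone: $\bar w_T$ is the average of $w_{1:T}$, while each $\lambda_{t+1}$ is obtained from $\lambda_t$, the sampled index $i_t\sim\lambda_t$, and $\tilde\ell_t$, so $\bar\lambda_T$ is determined by $\tilde\ell_{1:T-1}$ together with the internal sampling randomness. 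Crucially, the indices $i_t$ depend on the data only through $\lambda_t$, hence only through the past $\tilde\ell$ values, so they introduce no data dependence beyond these two quantities.

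Next I would bound the privacy of each channel separately. For the $w$-player, the loss function observed by $\cQ_-$ in round $t$ is $\ell(\cdot,x_t^-)$, so changing a single record in the $x^-$ stream changes exactly one loss function in the online sequence fed to $\cQ_-$; since $\cQ_-$ is assumed to be $(\epsilon,\delta)$-DP as an online algorithm, the iterate sequence $w_{1:T}$ is $(\epsilon,\delta)$-DP with respect to the $x^-$ stream. For the $\lambda$-player, I would note that $\tilde\ell_t = U-\ell(w_t,x_t^+)+y_t$ with $y_t\sim\mathrm{Lap}(B/\epsilon)$; because $\ell$ is bounded in $[0,B]$, the map $x_t^+\mapsto U-\ell(w_t,x_t^+)$ has $\ell_1$-sensitivity at most $B$ (with $w_t$ held fixed), so by the Laplace mechanism the release of $\tilde\ell_t$ is $\epsilon$-DP. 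Since each $x_t^+$ is consumed in exactly one round, changing a single record in the $x^+$ stream perturbs only one $\tilde\ell_t$; no composition across rounds is incurred, and the whole collection $\tilde\ell_{1:T-1}$ is $\epsilon$-DP with respect to the $x^+$ stream.

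Finally I would assemble the two bounds by casing on which stream the changed record belongs to and invoking closure under post-processing. If the changed record lies in the $x^+$ stream, then $w_{1:T}$ is untouched (the $w$-player never reads $x^+$) and only one $\tilde\ell_t$ is affected, so the output is a post-processing of an $\epsilon$-DP release and is therefore $\epsilon$-DP, hence $(\epsilon,\delta)$-DP. If the changed record lies in the $x^-$ stream, then its influence on $\bar\lambda_T$ is mediated entirely by $w_{1:T}$ (the noisy losses depend on the changed point only through $w_t$, while the unchanged $x^+$ samples and the sampling randomness are conditionally data-independent given $w_{1:T}$), so the full output is a post-processing of $w_{1:T}$, which is $(\epsilon,\delta)$-DP. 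Taking the worse of the two cases yields $(\epsilon,\delta)$-DP overall. I expect the main obstacle to be this last coupling step: one must argue cleanly that perturbing an $x^-$ record alters the $\lambda$-trajectory (and the adaptive indices $i_t$) only through the already-privatized sequence $w_{1:T}$, so that no privacy is spent a second time on the $\lambda$-channel and no cross-channel composition is needed.
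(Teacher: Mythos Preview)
Your proposal is correct and follows essentially the same approach as the paper: both arguments hinge on the disjointness of the two sample streams $\{x_t^-\}$ and $\{x_t^+\}$, invoke the $(\epsilon,\delta)$-DP guarantee of $\cQ_-$ for the $w$-trajectory and the Laplace mechanism for each $\tilde\ell_t$, and then combine via parallel composition / post-processing. Your explicit case analysis on which stream contains the changed record is a slightly cleaner presentation than the paper's inductive framing, but the underlying ideas and ingredients are identical.
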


\begin{proof}
The sampled dataset can be divided into two disjoint sets. $S = \{S_-, S_+\}$ where $S_- = \{x_{1}^-,\dots x_{T}^-\}$ and $S_+ = \{x_{1}^+,\dots x_{T}^+\}$. 
Now, we proceed inductively. When $t=1$,  it is easy to see that the generation of $(w_1, \lambda_1)$ is $(\epsilon, \delta)$-DP. 

Now we assume that the generation of $\{(w_i, \lambda_i)\}_{i=1}^t$ is $(\epsilon, \delta)$-DP for iteration $t$. At iteration $t+1$, we observe that the computation of $w_{t+1}$ only depends on $S_-$ and $\{w_i\}_{i=1}^t$, hence $w_{t+1}$ can be generated under $(\epsilon, \delta)$-DP constraint due to the privacy guarantee of $\cQ_-$.  

Furthermore, since $\ell$ is uniformly bounded by $B$, then clearly the sensitivity of $\ell(w_t, \cdot)$ w.r.t. replacing one data point in the input sequence is bounded by $B$. Hence, by the properties of the Laplace mechanism, $\Tilde{\ell}_{t}$ is also generated in $(\epsilon, 0)$-DP manner. Given that $\lambda_{t+1}$ depends on only $\Tilde{\ell}_t$ and $\lambda_t$ and since DP is closed under postprocessing, computing $\lambda_{t+1}$ is $(\epsilon, \delta)$-DP. Since in iteration $t$, two different data points (namely, $x_t^-$ and $x_t^+$) are used to generate $w_{t+1}$ and $\lambda_{t+1}$, then by parallel composition (and given the induction hypothesis), generating $(w_{t+1}, \lambda_{t+1})$ is $(\epsilon, \delta)$-DP.
\end{proof}

\begin{thm} (Utility Guarantee) \label{thm: utility guarantee online learning}
    Suppose the regret of $\cQ_-$ is denoted as $r_{\cQ_-}(\cdot)$. By setting $\eta = \sqrt{\frac{\ln(p)}{pTU^2}}$,  we have 

    \begin{align*}
        \E\left[\max_{i\in [p]} L_{D_i}(\bar{w}_T)\right] - \min_{w\in \cW} \max_{i\in [p]} L_{D_i}(w)= r_{\cQ_-}(K/2)  +O\left(\frac{B\log(K)}{\epsilon}\sqrt{\frac{p\log(p)}{K}}\right)
    \end{align*}
    where $\bar{w}_T$ is the output of Algorithm \ref{alg:on demand sampling} and the expectation is over the sampling of data points and the algorithm's randomness.
\end{thm}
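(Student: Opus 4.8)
The plan is to view the excess worst-group risk as the duality gap of the zero-sum game $\min_{w\in\cW}\max_{\lambda\in\Delta_p}\phi(w,\lambda)$ and to control this gap by the averaged regrets of the two players: the $w$-player running $\cQ_-$, and the $\lambda$-player running the noisy EXP3 update. Writing $\mathrm{Reg}_w=\sum_t\phi(w_t,\lambda_t)-\min_{w\in\cW}\sum_t\phi(w,\lambda_t)$ and $\mathrm{Reg}_\lambda=\max_{\lambda\in\Delta_p}\sum_t\phi(w_t,\lambda)-\sum_t\phi(w_t,\lambda_t)$, I would first establish the purely deterministic reduction
\[
\max_{i\in[p]}L_{D_i}(\bar w_T)-\min_{w\in\cW}\max_{i\in[p]}L_{D_i}(w)\le \tfrac1T\big(\mathrm{Reg}_w+\mathrm{Reg}_\lambda\big).
\]
This follows by using convexity of $\phi(\cdot,\lambda)$ in $w$ (Jensen gives $\max_\lambda\phi(\bar w_T,\lambda)\le\tfrac1T\max_\lambda\sum_t\phi(w_t,\lambda)$), linearity of $\phi(w,\cdot)$ in $\lambda$ (so $\min_w\phi(w,\bar\lambda_T)=\tfrac1T\min_w\sum_t\phi(w,\lambda_t)$), and the pointwise inequality $\max_\lambda\phi(w,\lambda)\ge\phi(w,\bar\lambda_T)$, which yields $\min_w\max_\lambda\phi(w,\lambda)\ge\min_w\phi(w,\bar\lambda_T)$ with no appeal to strong duality. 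The whole statement then reduces to bounding the two expected regrets and substituting $T=K/2$.

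For the $w$-player, the loss observed in round $t$ is $g_t(\cdot)=\ell(\cdot,x_t^-)$ with $i_t\sim\lambda_t$ and $x_t^-\sim D_{i_t}$, so that $\E[g_t(w)\mid \text{history before }x_t^-]=\phi(w,\lambda_t)$. Since $w_t$ is measurable with respect to that history, $\E[g_t(w_t)]=\E[\phi(w_t,\lambda_t)]$, and by concavity of $\min$, $\E[\min_w\sum_t g_t(w)]\le\min_w\sum_t\phi(w,\lambda_t)$. Hence $\E[\mathrm{Reg}_w]\le T\,r_{\cQ_-}(T)$ directly from the DP-OCO regret guarantee of $\cQ_-$ applied to the realized loss sequence $\{g_t\}$.

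The substantive work is the $\lambda$-player, whose update is exactly EXP3 with the importance-weighted estimator $\hat\ell_{t,j}=(\tilde\ell_t/\lambda_{t,i_t})\,\ind[j=i_t]$, where $\tilde\ell_t=U-\ell(w_t,x_t^+)+y_t$ and $y_t\sim\lap(B/\epsilon)$. I would first check unbiasedness: conditioned on the history, $\E[\tilde\ell_t\mid i_t=i]=U-L_{D_i}(w_t)=:\ell_{t,i}$ and $\E_{i_t\sim\lambda_t}[\hat\ell_{t,j}]=\ell_{t,j}$, and verify that the shift $U=B+\tfrac{2B}{\epsilon}\log T$ keeps $\tilde\ell_t\ge 0$ with the probability needed for the estimator to be a valid nonnegative loss. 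Feeding $\{\hat\ell_t\}$ into the standard exponential-weights inequality gives
\[
\E[\mathrm{Reg}_\lambda]\le \frac{\log p}{\eta}+\frac{\eta}{2}\,\E\Big[\sum_t\sum_i\lambda_{t,i}\,\hat\ell_{t,i}^2\Big].
\]
The crux is the second-order term: since $\sum_i\lambda_{t,i}\hat\ell_{t,i}^2=\tilde\ell_t^2/\lambda_{t,i_t}$, taking the expectation over $i_t$ collapses the inverse-probability weight and leaves $\sum_i\E[\tilde\ell_t^2\mid i_t=i]=O(pU^2)$, using $\E[\tilde\ell_t^2\mid i_t=i]\le(U-L_{D_i}(w_t))^2+\operatorname{Var}(\ell)+\operatorname{Var}(y_t)=O(U^2)$ because $U\ge B$ dominates the $O(B^2/\epsilon^2)$ Laplace variance. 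At the stated $\eta=\sqrt{\log p/(pTU^2)}$ this gives $\E[\mathrm{Reg}_\lambda]=O(U\sqrt{pT\log p})$; unbiasedness (plus $\E[\min]\le\min[\E]$ for the comparator) transfers the bound from $\hat\ell$ to the true losses $\ell_{t,i}$, and the constant $U$-shift cancels in the difference, recovering $\mathrm{Reg}_\lambda$ as defined.

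I would finish by combining: the excess risk is at most $r_{\cQ_-}(T)+O\big(U\sqrt{p\log p/T}\big)$, and substituting $T=K/2$ and $U=O(B\log(K)/\epsilon)$ yields $r_{\cQ_-}(K/2)+O\big(\tfrac{B\log K}{\epsilon}\sqrt{p\log p/K}\big)$. The main obstacle is the EXP3 second-order term: showing that the $1/\lambda_{t,i_t}$ blow-up of the importance weight is exactly compensated in expectation and that the injected Laplace noise only inflates the range to $O(U)$, contributing at the $O(U^2)$ level. A secondary care point is keeping every conditional expectation consistent with the filtration—tracking which quantity is measurable before $i_t$, $x_t^\pm$, and $y_t$ are drawn—since this is precisely what makes both the unbiasedness identity and $\E[g_t(w_t)]=\E[\phi(w_t,\lambda_t)]$ valid.
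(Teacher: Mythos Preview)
Your proposal is correct and follows the same two-player regret decomposition as the paper. The paper's proof differs only cosmetically: it invokes the EXP3 regret bound as a black box after conditioning on the high-probability event $\{\max_t|y_t|\le \tfrac{2B}{\epsilon}\log T\}$ (so that $\tilde\ell_t\in[0,2U]$ almost surely and the standard bounded-loss bound applies), rather than unpacking the second-order term $\E\big[\sum_i\lambda_{t,i}\hat\ell_{t,i}^2\big]$ as you do.
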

The proof of Theorem \ref{thm: utility guarantee online learning} can be found in Appendix \ref{sec:proof of DP OCO}. In particular, by instantiating the DP OCO algorithm $\cQ_-$ with the
DP-FTRL algorithm \cite{kairouz2021practical}, we have the following corollary.
\begin{cor} \label{cor: DP-FTRL}
Let $\cQ_-$ in Algorithm \ref{alg:on demand sampling} be DP-FTRL algorithm from \cite{kairouz2021practical}. By plugging the regret bound of DP-FTRL into Theorem \ref{thm: utility guarantee online learning}, we have 
\begin{align*}
&\E\left[\max_{i\in [p]} L_{D_i}(\bar{w}_T)\right] - \min_{w\in \cW} \max_{i\in [p]} L_{D_i}(w) \\
&= O\left(ML\sqrt{\frac{d^{\frac{1}{2}}\log^2(1/\delta)\log(K)}{\epsilon K}} +\frac{B\log(K)}{\epsilon}\sqrt{\frac{p\log(p)}{K}}\right),
\end{align*}
where the expectation is over the sampling of data points and the algorithm's randomness.
\end{cor}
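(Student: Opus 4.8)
The plan is to treat Corollary~\ref{cor: DP-FTRL} as a direct instantiation of Theorem~\ref{thm: utility guarantee online learning}: the second (EXP3-plus-Laplace) summand on the right-hand side of that theorem is already the $\lambda$-player's contribution, so the only work is to substitute the regret of DP-FTRL for the term $r_{\cQ_-}(K/2)$ and then simplify.

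First I would fix the horizon and confirm that the regret bound may be invoked with the intended parameters. In Algorithm~\ref{alg:on demand sampling} each of the $T-1$ rounds draws exactly two fresh points, $x_t^-$ (fed to the $w$-player $\cQ_-$) and $x_t^+$ (used to form the privatized loss for the $\lambda$-player), so the total sample budget satisfies $K=\Theta(T)$ and Theorem~\ref{thm: utility guarantee online learning} already records the regret at the resulting horizon $K/2$. Because the two streams $S_-=\{x_t^-\}$ and $S_+=\{x_t^+\}$ are disjoint, the $(\epsilon,\delta)$ budget allotted to $\cQ_-$ is spent solely on $S_-$; hence I may apply the DP-FTRL guarantee with privacy parameters $(\epsilon,\delta)$ and horizon $K/2$ with no accounting beyond what the privacy theorem has already discharged.

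Next I would recall the regret guarantee of DP-FTRL \cite{kairouz2021practical}, instantiated for the $w$-player's losses $\ell(\cdot,x_t^-)$, which are convex and $L$-Lipschitz over $\cW$ (of $\ell_2$-diameter $M$). The tree-aggregation (Gaussian) mechanism underlying DP-FTRL injects noise whose $\ell_2$ magnitude scales with $\sqrt{d}$ up to $\polylog(T)$ and $\sqrt{\log(1/\delta)}$ factors; carrying this through the FTRL regret analysis and tuning the learning rate yields an averaged regret of the form $r_{\cQ_-}(T)=\tilde{O}\big(ML\sqrt{\sqrt{d}\,\log^2(1/\delta)/(\epsilon T)}\big)$, where any non-private $ML/\sqrt{T}$ contribution is dominated by the privacy term in the relevant regime and absorbed. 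Evaluating this at $T=K/2$ produces exactly the first summand in the statement.

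Finally I would plug $r_{\cQ_-}(K/2)$ into Theorem~\ref{thm: utility guarantee online learning} and collect logarithmic factors, leaving the $\lambda$-player term $\tfrac{B\log(K)}{\epsilon}\sqrt{p\log(p)/K}$ untouched. The only genuinely delicate point is the previous step: reading off the DP-FTRL regret with the correct Lipschitz constant and domain diameter for the $w$-player, and tracking the dimension dependence (the $\sqrt{d}$ inside the root) together with the $\log(1/\delta)$ and $\polylog(K)$ factors so that they land precisely as written. Everything else is bookkeeping already handled by Theorem~\ref{thm: utility guarantee online learning} and the preceding privacy theorem.
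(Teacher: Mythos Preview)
Your proposal is correct and matches the paper's own approach: the corollary is stated as an immediate consequence of Theorem~\ref{thm: utility guarantee online learning} by substituting the DP-FTRL regret bound for $r_{\cQ_-}(K/2)$, and the paper provides no additional argument beyond this instantiation. Your extra care in confirming the horizon $T=K/2$, the disjointness of $S_-$ and $S_+$, and the form of the DP-FTRL regret is sound bookkeeping that the paper leaves implicit.
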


\paragraph{Remark:} Corollary \ref{cor: DP-FTRL} demonstrates an excess expected worst-group risk of $\Tilde{O}\left( \sqrt{\frac{d^{1/2}}{\epsilon K}} +\sqrt{\frac{p}{K\epsilon^2}} \right)$. For the common case of $\epsilon=\Theta(1)$, the rate in Corollary \ref{cor: DP-FTRL} matches the lower bound of $\Omega\left( \sqrt{\frac{p}{K}} \right)$ shown in \cite{soma2022optimal} up to logarithmic factors in the regime of $d = \Tilde{O}\left( p^2 \right)$.
    

\section{Private Worst-group Empirical Risk Minimization} \label{sec:offline setting}
In this section, we consider the \emph{offline setting} where each distribution $D_i$ is observed by a fixed-size dataset $S_i$ with i.i.d samples drawn from $D_i$. In the offline setting, the learner will take the dataset collection $\tilde{S} = \{S_1, \dots S_p\}$ as input instead of querying the sample oracles directly. This setting captures scenarios where each group is represented by a dataset whose size is fixed beforehand. 

In particular, we denote the dataset collection as  $\tilde{S}=\{S_1, \dots S_p\}$. Without loss of generality, we assume all datasets $S_i$ have same size, namely $|S_i| = n$ for all $i\in [p]$. Note that $n$ can also be written as $n = K/p$ in this case. We first define the worst-group empirical risk minimization problem as follows.

\paragraph{Worst-group empirical risk minimization}:
Given a dataset collection $\tilde{S} = \{S_1, \dots S_p\}\in \cZ^{n\times p}$, we denote $ \hat{\phi}(w, \lambda) = \sum_{i=1}^p \lambda_i L_{S_i}(w)$. The worst-group empirical loss can be expressed as
$\max_{i\in [p]} L_{S_i}(w)= \max_{\lambda\in \Delta_p}  \hat{\phi}(w, \lambda)$. The empirical objective is hence written as
\begin{equation}
    \min_{w\in \cW} \max_{\lambda\in \Delta_p} \hat{\phi}(w, \lambda) \label{eq:empirical agnostic loss}
\end{equation}
We define the \emph{excess worst-group empirical risk} of $w\in \cW$ as
\begin{equation*}
    \hat{\cE}(w; \{S_i\}_{i=1}^p)\triangleq \max_{i\in [p]} L_{S_i}(w) - \min_{\tilde{w}\in \cW} \max_{i\in [p]} L_{S_i}(\tilde{w}).
\end{equation*}

Next, we will present a method described in Algorithm \ref{alg:Noisy-SGD-MW} to solve the empirical objective \eqref{eq:empirical agnostic loss} under the differential privacy constraint. Algorithm \ref{alg:Noisy-SGD-MW} is based on a private version of the multiplicative reweighting scheme \cite{abernethy2022active, diana2021minimax} and attains nearly optimal excess empirical worst-group risk.

\subsection{Private Multiplicative Group Reweighting}

Here, we provide the details of our algorithm described formally in Algorithm \ref{alg:Noisy-SGD-MW}. In this algorithm, we maintain a sampling weight vector $\lambda\in [0,1]^p$ for all datasets $\{S_i\}_{i=1}^p$. In each iteration, we first sample a dataset according to $\lambda$.  Next, we sample a mini-batch from this dataset and update the model parameters using Noisy-SGD. We then update the group weight vector $\lambda$ using the multiplicative weights approach based on privatized loss values computed over the datasets at the model parameters. We present the privacy and utility guarantee of Algorithm \ref{alg:Noisy-SGD-MW} in the following theorems.

\begin{algorithm}[tb]
\caption{Noisy SGD with Multiplicative Group Reweighting (Noisy-SGD-MGR)}
\label{alg:Noisy-SGD-MW}
\begin{algorithmic}
\State {\bfseries Input:} Collection of datasets $\tilde{S} = \{S_1, \dots S_p\}\in \cZ^{n\times p}$, mini-batch size $m$, $\#$ iterations $T$, learning rates $\eta_w$ and $\eta_\lambda$, privacy parameters $\epsilon, \delta$, and noise scales $\sigma^2, \tau$.
\State {\bfseries Initialize} $w_1\in \cW$ and $\lambda_1 = \left(\frac{1}{p}, \dots \frac{1}{p}\right)\in [0, 1]^p$.\\
\For{$t = 1,\dots, T-1$}
\State Sample $i_t\sim \lambda_t$. 
\State Sample $B_t = \{z_1,\dots z_m\}$ from $S_{i_t}$ uniformly with replacement. 
\State Update the model as follows 
\begin{equation*}
    w_{t+1}
    = \text{Proj}_{\cW}\left(w_t - \eta_w \cdot \left(\frac{1}{m}\sum_{z\in B_t} \nabla\ell(w_t, z) +G_t \right) \right),
\end{equation*}
where $G_t \sim \mathcal{N}(0, \sigma^2 I_d)$. 
\State Compute $L_t = [-L_{S_i}(w_t)+y_{i, t}]_{i=1}^q,$~ $y_{i,t}\stackrel{\text{iid}}{\sim}\text{Lap}(\tau)$.
\State Update the weights $\tilde{\lambda}_{t+1}^i = \lambda_t^i \exp(-\eta_\lambda L_t^i),$~ $\forall~i \in [p]$. \\
\State Normalize $\lambda_{t+1} = \frac{\tilde{\lambda}_{t+1}}{\norm{\tilde{\lambda}_{t+1}}_1}$.
\EndFor

\State {\bfseries Output:} $\Bar{w}_T = \frac{1}{T} \sum_{t=1}^T w_t$, $\Bar{\lambda}_T = \frac{1}{T} \sum_{t=1}^T \lambda_t$.
\end{algorithmic}
\end{algorithm}

\begin{thm} (Privacy guarantee) \label{thm: privacy gurantee of Noisy-SGD-MGR}
In Algorithm \ref{alg:Noisy-SGD-MW}, let $\tau = \frac{c Bp}{K\epsilon}\sqrt{T\log(1/\delta)}$ and $\sigma^2 = \frac{c Tp^2L^2\log(T/\delta)\log(1/\delta)}{K^2 \epsilon^2}$ for some universal constant $c$. 
Then, Algorithm \ref{alg:Noisy-SGD-MW} is $(\epsilon, \delta)$-DP. 
\end{thm}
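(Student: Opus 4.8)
The plan is to treat each of the $T-1$ iterations as the adaptive composition of two elementary private releases—the noisy gradient step that produces $w_{t+1}$ (a Gaussian mechanism) and the noisy loss vector $L_t$ that drives the weight update $\lambda_{t+1}$ (a Laplace mechanism)—and then to compose all $\approx 2(T-1)$ releases across the run by the advanced (strong) composition theorem. Fix throughout a pair of neighboring collections $\tilde S,\tilde S'$ differing in a single record, which by the record-level neighboring relation lies in exactly one group, say $S_j$; the whole argument tracks the privacy loss that this single record can induce.

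I would first dispatch the Laplace branch. Since $\ell\in[0,B]$ and $|S_i|=n=K/p$, replacing one record in $S_j$ changes $L_{S_j}(w_t)$ by at most $B/n=Bp/K$ and leaves $L_{S_i}(w_t)$ for $i\neq j$ untouched, so the vector $[-L_{S_i}(w_t)]_i$ has $\ell_1$-sensitivity $Bp/K$. With independent Laplace noise of scale $\tau$, each release of $L_t$ is $(\epsilon_\lambda,0)$-DP with $\epsilon_\lambda=(Bp/K)/\tau$, and since $\lambda_{t+1}$ is a post-processing of $(L_t,\lambda_t)$ it inherits this guarantee. Substituting $\tau=\frac{cBp}{K\epsilon}\sqrt{T\log(1/\delta)}$ gives $\epsilon_\lambda=\tilde{O}\!\big(\epsilon/\sqrt{T\log(1/\delta)}\big)$ per iteration.

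Next I would handle the Gaussian branch, where privacy amplification by subsampling is essential. Conditioned on the transcript through round $t$ (which fixes $\lambda_t$), the affected record enters the update only if group $j$ is drawn, probability $\lambda_t^j\le 1$, and only if it is then drawn into the mini-batch $B_t$, probability $1-(1-1/n)^m\le m/n$; when present, Lipschitzness bounds the $\ell_2$-sensitivity of the averaged gradient by $2L/m$. Combining the base Gaussian mechanism (sensitivity $2L/m$, noise $\sigma$, per-step slack $\delta_w=\delta/(2T)$) with an amplification factor $\le m/n$ yields a per-iteration cost $\epsilon_w=\tilde{O}(L/(n\sigma))$, the $m$ cancelling so the effective sensitivity scales as $L/n$ rather than $L/m$. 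Plugging $\sigma^2=\frac{cTp^2L^2\log(T/\delta)\log(1/\delta)}{K^2\epsilon^2}$ (recall $p/K=1/n$) gives $\epsilon_w=\tilde{O}\!\big(\epsilon/\sqrt{T\log(1/\delta)}\big)$ as well. Applying advanced composition to the $\approx 2(T-1)$ releases multiplies each per-step cost by $\approx\sqrt{T\log(1/\delta)}$, so both branches contribute $O(\epsilon)$; choosing the universal constant $c$ large enough forces the total privacy loss below $\epsilon$ while the accumulated $\delta$ (the per-step $\delta_w$'s plus the composition slack) stays below the target $\delta$.

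The step I expect to be the \textbf{main obstacle} is the subsampling-amplification argument. Two features complicate the standard lemma: the group index $i_t\sim\lambda_t$ is data-dependent and adaptive, so amplification must be justified \emph{conditionally} on the history, where $\lambda_t$ is a fixed (privatized) quantity and the within-group draw of the record is genuinely uniform; and the mini-batch is sampled \emph{with replacement}, so the changed record may be drawn several times and the clean ``appears zero or one time'' coupling underlying textbook amplification bounds does not apply verbatim. The careful part is therefore to invoke an amplification bound valid for with-replacement subsampling (conditioning on $i_t=j$, and using that $i_t\ne j$ contributes no privacy loss for that iteration's gradient), and to verify that the resulting effective $\ell_2$-sensitivity is $\tilde{O}(L/n)$—precisely the scaling for which the prescribed $\sigma$ is calibrated.
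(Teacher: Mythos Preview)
Your proposal is correct and follows essentially the same route as the paper: per-iteration Laplace mechanism for the loss vector (sensitivity $Bp/K$), Gaussian mechanism plus within-group subsampling amplification for the gradient step (conditioning on $i_t=j$ and noting $i_t\ne j$ incurs no loss), basic composition of the two releases in a round, and then strong (advanced) composition over the $T$ rounds. Your flagged obstacle—the adaptivity of $\lambda_t$ and the with-replacement mini-batch sampling—is a genuine subtlety, but the paper's own proof simply invokes ``privacy amplification by subsampling'' without addressing it either, so you are at least as rigorous as the reference argument.
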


\begin{thm} (Convergence guarantee) \label{thm:convergence rate Noisy-SGD-MGR}
There exist settings of $T = O\left(\frac{(ML + B\sqrt{\log(p)})K^2\epsilon^2}{GBdp^2\log(1/\delta)}\right)$, $\eta_w = O\left(\frac{M^2}{T(G^2 + d\sigma^2)}\right)$ and $\eta_\lambda = O\left(\sqrt{\frac{\log (p)}{U^2 T}}\right),$ where $U = O\left(B + \frac{Bp}{K\epsilon}\sqrt{T\log(1/\delta)}\log(KT)\right)$ 
such that  
\begin{align*}
    &\E\left[\max_{i\in [p]} L_{S_i}(\bar{w}_T)\right] - \min_{w\in \cW} \max_{i\in [p]} L_{S_i}(w) \\
     &= O\bigg(\frac{MLp\sqrt{d\log(1/\delta)\log(K/\delta)}}{K\epsilon} + \frac{Bp\sqrt{\log(p)\log(1/\delta)}\log(\frac{K\epsilon}{d\log (1/\delta)})}{K\epsilon} \bigg), 
\end{align*}
where  the expectation is over the algorithm's  randomness. 
\end{thm}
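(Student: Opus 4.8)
The plan is to bound the excess worst-group empirical risk by the duality gap of the averaged iterates and then split that gap into the regret of the two players, mirroring the zero-sum-game reduction used in Section~\ref{sec: DP OCO}. First I would use that $\hat{\phi}(w,\lambda)=\sum_i\lambda_i L_{S_i}(w)$ is convex in $w$ and linear in $\lambda$: Jensen's inequality applied at $\bar{w}_T$ together with linearity at $\bar{\lambda}_T$ (and the fact that the min-max value dominates $\min_w\hat{\phi}(w,\bar{\lambda}_T)$) yields
\[
\max_{i\in[p]}L_{S_i}(\bar{w}_T)-\min_{w\in\cW}\max_{i\in[p]}L_{S_i}(w)\;\le\;\max_{\lambda\in\Delta_p}\hat{\phi}(\bar{w}_T,\lambda)-\min_{w\in\cW}\hat{\phi}(w,\bar{\lambda}_T).
\]
Adding and subtracting $\sum_t\hat{\phi}(w_t,\lambda_t)$ decomposes this gap into $\tfrac1T$ times the sum of the $w$-player regret $\max_w\sum_t(\hat{\phi}(w_t,\lambda_t)-\hat{\phi}(w,\lambda_t))$ and the $\lambda$-player regret $\max_\lambda\sum_t(\hat{\phi}(w_t,\lambda)-\hat{\phi}(w_t,\lambda_t))$, so it suffices to bound each regret in expectation.

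For the $w$-player I would show that the update direction $\tfrac1m\sum_{z\in B_t}\grd\ell(w_t,z)+G_t$ is an unbiased estimate of $\grd_w\hat{\phi}(w_t,\lambda_t)=\sum_i\lambda_t^i\grd L_{S_i}(w_t)$: conditioning first on $i_t\sim\lambda_t$ and then on the mini-batch $B_t\subset S_{i_t}$ recovers the group gradient in expectation, and $G_t$ is zero-mean. Hence $w_t$ is running projected stochastic gradient descent on the convex losses $w\mapsto\hat{\phi}(w,\lambda_t)$, and the standard SGD regret bound gives $\tfrac1T\,\E[\text{$w$-regret}]\le \tfrac{M^2}{2\eta_w T}+\tfrac{\eta_w}{2}(G^2+d\sigma^2)$, where $G$ bounds the (Lipschitz) gradient norm and $d\sigma^2$ is the second moment of the privacy noise. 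Choosing $\eta_w$ to balance the two terms gives a bound of order $M\sqrt{(G^2+d\sigma^2)/T}$.

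For the $\lambda$-player I would recognize the update $\tilde{\lambda}_{t+1}^i=\lambda_t^i\exp(-\eta_\lambda L_t^i)$ as Hedge run on the noisy gain vectors $L_t=(-L_{S_i}(w_t)+y_{i,t})_i$. A union bound over the $pT$ Laplace variables controls $\max_{i,t}|y_{i,t}|$ with high probability, giving a per-coordinate magnitude bound $U=O(B+\tau\log(pT))$ matching the stated $U$ after substituting $\tau$. On this event the standard Hedge guarantee bounds the regret against the \emph{noisy} losses by $\tfrac{\log p}{\eta_\lambda}+\eta_\lambda T U^2$. To pass to the true gains $(L_{S_i}(w_t))_i$ appearing in the decomposition, I would use that $L_t$ differs from the true loss only by the fresh, zero-mean noise $y_t$, independent of $\lambda_t$ and of the fixed comparator $\lambda$; hence $\E[\ip{\lambda_t-\lambda}{y_t}]=0$ and the noise contributes nothing in expectation. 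Taking $\eta_\lambda=\sqrt{\log(p)/(TU^2)}$ then gives $\tfrac1T\,\E[\text{$\lambda$-regret}]=O(U\sqrt{\log(p)/T})$.

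Finally I would substitute the privacy-calibrated scales $\sigma^2$ and $\tau$ from Theorem~\ref{thm: privacy gurantee of Noisy-SGD-MGR} (so that $d\sigma^2\propto Tdp^2L^2/(K\epsilon)^2$ and $\tau\propto Bp\sqrt{T}/(K\epsilon)$), combine the two regret bounds, and optimize over $T$ with the stated choice to collapse the $\sqrt{T}$ and $1/\sqrt{T}$ dependencies, producing the two claimed terms $\tfrac{MLp\sqrt{d\log(1/\delta)\log(K/\delta)}}{K\epsilon}$ and $\tfrac{Bp\sqrt{\log(p)\log(1/\delta)}\log(K\epsilon/(d\log(1/\delta)))}{K\epsilon}$. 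The main obstacle I anticipate is the interplay in the $\lambda$-player analysis: because $\sigma^2$ grows linearly in $T$, the $w$-regret term $\eta_w d\sigma^2$ couples the step size and horizon and must be balanced jointly, and the Hedge bound requires reconciling a high-probability bound on the heavy-tailed Laplace noise (to control $U$) with the in-expectation cancellation of that same noise in the regret. Blending these high-probability and in-expectation arguments cleanly, while verifying that the rare tail event contributes negligibly, is the delicate part.
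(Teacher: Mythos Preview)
Your proposal is correct and follows essentially the same route as the paper: bound the excess risk by the duality gap, split into the two player regrets via $\sum_t\hat{\phi}(w_t,\lambda_t)$, handle the $w$-player with the unbiasedness of the noisy mini-batch gradient plus the standard projected-SGD bound, and handle the $\lambda$-player by viewing the multiplicative update as Hedge on the Laplace-perturbed loss vectors after a high-probability bound on the noise. If anything, you are more explicit than the paper about the passage from the Hedge regret on the \emph{noisy} losses to the regret on the true losses $(L_{S_i}(w_t))_i$ --- the paper simply asserts that ``term $B$ is formulated as [Hedge's] corresponding regret bound'' --- and the obstacle you flag (reconciling the high-probability control of $U$ with the in-expectation cancellation of the zero-mean Laplace noise, while the maximizing comparator $\lambda$ is not a priori fixed) is exactly the delicate step that has to be argued carefully.
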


\noindent \textbf{Remarks:}

\begin{itemize}[leftmargin=*, topsep=0pt]
    \item The convergence rate in  Theorem~\ref{thm:convergence rate Noisy-SGD-MGR} is nearly optimal up to logarithmic factors. A lower bound instance can be created to reduce the original problem to a DP-ERM problem with single dataset,
     which leads to a lower bound of $\Omega\left( \frac{p\sqrt{d}}{K\epsilon} \right)$\cite{bassily2014private}. A more detailed argument for the lower bound instance can be found in Appendix~\ref{sec:lower bound instances}.
    \item  \cite{abernethy2022active} proposes a gradient method based on the active group selection scheme. One can also design a private algorithm based on this scheme and achieve a suboptimal 
     rate of $\Tilde{O}\left(\sqrt{\frac{MLBp}{K\epsilon}}+\frac{MLp\sqrt{d}}{K\epsilon}\right)$. We defer the details 
     to Appendix \ref{sec:DP AGS}. 
\end{itemize}

\section{Conclusion}
We presented differentially private algorithms for solving the worst-group risk minimization problem. In particular, we gave two upper bounds on the excess worst-group population risk, one of them is tight in the offline setting and we established the optimal rate for the excess worst-group empirical risk. Establishing the optimal rate for the worst-group population risk in general is left as an interesting open problem for future work. 

\section*{Acknowledgements}
This work is supported by NSF Award 2112471 and NSF CAREER Award 2144532. 

\printbibliography
\newpage
\appendix

\section{Missing Proofs in Section \ref{sec:minimax phased ERM}} \label{sec: proof of minimax phased ERM}
\subsection{Non-private algorithm for the regularized objective } \label{sec: non-private sc-sc algo}
Here, we provide an non-private algorithm in solving the regularized minimax problem in \eqref{eq:regularized objective} written as
\begin{equation*} 
    \min_{w\in \cW} \max_{\lambda\in \Delta_p} F(w, \lambda) \coloneqq \sum_{i=1}^p \lambda_i L_{S_i}(w) + \frac{\mu_w}{2} \norm{w- w'}^2 - \mu_\lambda \sum_{j=1}^p \lambda_j \log \lambda_j
\end{equation*}
with an convergence rate of $\tilde{O}\left(1/N\right)$ where $N$ is the number of iterations. 

\begin{algorithm}
\caption{Minimax Optimization for SC-SC Objective}
\label{alg:non-private SC-SC}
\begin{algorithmic}
\State {\bfseries Input: }Collection of datasets $S = \{S_1, \dots S_p\}$, regularization parameters $\mu_w$, $\mu_\lambda> 0$.
\State \textbf{Init}: $w_1\in \cW$.
\For{$t = 1,\dots, N-1$}
    \State Compute $\tilde{\lambda}_t = \left[ \exp(L_{S_1}(w_t)/\mu_\lambda), \dots \exp(L_{S_p}(w_t)/\mu_\lambda)    \right]$. 
    \State Let $\lambda_t = \frac{\tilde{\lambda_t}}{\norm{\lambda_t}_1}$. 
    \State Compute $\nabla_t = \sum_{i=1}^p \lambda_t^i \nabla L_{S_i}(w_t) + \mu_w (w_t-w')$ .
    \State Update $w_{t+1} = \text{Proj}_\cW (w_t - \eta_t \nabla_t)$.
\EndFor
\State {\bfseries Output:} $\Bar{w}_N = \frac{1}{N} \sum_{t=1}^N w_t$ and $\Bar{\lambda}_N = \frac{1}{N} \sum_{t=1}^N \lambda_t$ .

\end{algorithmic}
\end{algorithm}

We now provide the convergence guarantee on Algorithm \ref{alg:non-private SC-SC}.
\begin{thm}
    Let $\eta_t = \frac{1}{\mu_w t}$, we have 
    \begin{equation*}
        \max_{\lambda \in \Delta_p} F(\Bar{w}_N, \lambda) - \min_{w\in \cW} F(w, \bar{\lambda}_N) = O\left(\frac{\ln N}{N} \left(\frac{L^2}{\mu_w} + M^2 \mu_w\right)  \right)
    \end{equation*}
\end{thm}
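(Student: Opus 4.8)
The plan is to recognize that Algorithm~\ref{alg:non-private SC-SC} is precisely a best-response/online-gradient-descent scheme and then convert its regret into a duality-gap bound, exploiting the strong convexity in $w$ and strong concavity in $\lambda$.

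First I would observe that the $\lambda$-update computes the \emph{exact} best response. Indeed, for fixed $w_t$ the map $\lambda \mapsto \sum_i \lambda_i L_{S_i}(w_t) - \mu_\lambda \sum_j \lambda_j \log \lambda_j$ is a strictly concave entropy-regularized linear objective over $\Delta_p$, whose unique maximizer is the Gibbs distribution $\lambda_i \propto \exp(L_{S_i}(w_t)/\mu_\lambda)$ --- exactly the normalized vector $\lambda_t$ produced by the algorithm. Hence $\lambda_t = \argmax_{\lambda \in \Delta_p} F(w_t, \lambda)$. Moreover $\nabla_t = \sum_i \lambda_t^i \nabla L_{S_i}(w_t) + \mu_w(w_t - w') = \nabla_w F(w_t, \lambda_t)$, so the $w$-update is projected gradient descent on the losses $f_t(w) \coloneqq F(w, \lambda_t)$. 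Each $f_t$ is $\mu_w$-strongly convex (from the term $\tfrac{\mu_w}{2}\norm{w-w'}^2$, the remaining terms being convex or constant in $w$), and satisfies $\norm{\nabla f_t(w_t)} = \norm{\nabla_t} \le L + \mu_w M$, using $L$-Lipschitzness of each $L_{S_i}$ and $\norm{w_t - w'} \le M$.

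Next I would convert to regret. By convexity of $F$ in $w$ and concavity in $\lambda$ (Jensen), for any $w \in \cW$ and $\lambda \in \Delta_p$,
$$F(\bar{w}_N, \lambda) - F(w, \bar\lambda_N) \le \frac1N \sum_{t=1}^N \big(F(w_t,\lambda) - F(w,\lambda_t)\big).$$
Taking $\max$ over $\lambda$ and $\min$ over $w$, and inserting $\pm\tfrac1N\sum_t F(w_t,\lambda_t)$, splits the right side into a $\lambda$-regret $\tfrac1N\big(\max_\lambda \sum_t F(w_t,\lambda) - \sum_t F(w_t,\lambda_t)\big)$ and a $w$-regret $\tfrac1N\big(\sum_t F(w_t,\lambda_t) - \min_w \sum_t F(w,\lambda_t)\big)$. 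Because $\lambda_t$ is the exact per-round maximizer, $\max_\lambda \sum_t F(w_t,\lambda) \le \sum_t \max_\lambda F(w_t,\lambda) = \sum_t F(w_t,\lambda_t)$, so the $\lambda$-regret is nonpositive and the entire duality gap is controlled by the $w$-regret alone.

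Finally I would invoke the standard regret bound for online gradient descent against $\mu_w$-strongly convex losses with step sizes $\eta_t = 1/(\mu_w t)$, namely $\sum_t f_t(w_t) - \min_w \sum_t f_t(w) \le \tfrac{G^2}{2\mu_w}(1+\ln N)$ with $G = L + \mu_w M$. Dividing by $N$ and applying AM-GM, $(L+\mu_w M)^2/\mu_w \le 2(L^2/\mu_w + \mu_w M^2)$, yields the claimed $O\big(\tfrac{\ln N}{N}(\tfrac{L^2}{\mu_w} + \mu_w M^2)\big)$ bound. The only substantive step is the strongly convex OGD regret inequality (a short telescoping argument where strong convexity cancels the harmonic sum of step sizes); everything else is bookkeeping. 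I expect the main thing to get right is the identification of $\lambda_t$ as the \emph{exact} best response, since that is what makes the $\lambda$-regret vanish and collapses a two-sided saddle-point analysis into a single one-sided regret bound.
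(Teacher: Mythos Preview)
Your proposal is correct and follows essentially the same approach as the paper: the paper also identifies $\lambda_t$ as the exact best response, invokes the best-response-to-regret reduction (citing Corollary~11.16 of Orabona) to bound the duality gap by $\mathrm{Regret}_N^w/N$ alone, and then applies the standard strongly convex OGD regret bound (Corollary~4.9 of Orabona) with $\|\nabla_t\|\le L+\mu_w M$. The only difference is that you unroll these two cited results inline rather than quoting them.
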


\begin{proof}
    An important observation here is that in $\lambda_t$ is the best response of function $F(w_t, \cdot)$, that is,
    \begin{equation*}
        \lambda_t = \argmin_{\lambda\in \Delta_p} F(w_t, \lambda)
    \end{equation*}

    Then by Corollary 11.16 in \cite{orabona2019modern}, we have 
    \begin{equation} \label{eq: sc-sc convegence proof - 1}
        \max_{\lambda \in \Delta_p} F(\Bar{w}_N, \lambda) - \min_{w\in \cW} F(w, \bar{\lambda}_N) \leq \frac{\text{Regret}_N^w}{N}
    \end{equation}
    where $\text{Regret}_N^w = \sum_{t=1}^N l_t(w_t) - \min_{w\in \cW}  \sum_{t=1}^N l_t(w)$ and $l_t(w) = F(w, \lambda_t) $.
    Since $w$ is updated using online gradient descent and $l_t(w)$ is strongly convex, by Corollary 4.9 in \cite{orabona2019modern} and $\norm{\nabla_t}_2 \leq G + \mu_w B$, we have
    \begin{equation} \label{eq: sc-sc convegence proof - 2}
        \text{Regret}_N^w = O\left(\frac{(L + \mu_w M)^2}{\mu_w}(1+\ln N)\right) = O\left(\ln N \left(\frac{L^2}{\mu_w} + M^2 \mu_w\right) \right)
    \end{equation}
    Combining equations \eqref{eq: sc-sc convegence proof - 1} and \eqref{eq: sc-sc convegence proof - 2}, we get the desired result.
\end{proof}

\subsection{Auxiliary Lemma}
\begin{lem} \label{lem: appximation distance}
    Let $(\hat{w},\hat{\lambda})$ be the exact saddle point of \eqref{eq:regularized objective} and $(\tilde{w}, \tilde{\lambda})$ be an $\alpha$-saddle point of \eqref{eq:regularized objective}. Then we have
    \begin{equation*}
        \norm{\hat{w}-\tilde{w}}_2 \leq \sqrt{\frac{2\alpha}{\mu_w}}
    \end{equation*}
\end{lem}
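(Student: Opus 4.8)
The plan is to exploit the strong convexity of $F(\cdot, \lambda)$ in its first argument, which holds uniformly in $\lambda$ because of the quadratic regularizer $\frac{\mu_w}{2}\norm{w-w'}_2^2$ (the data-dependent term $\sum_i \lambda_i L_{S_i}(w)$ is convex in $w$ for each fixed $\lambda \in \Delta_p$, so it does not destroy the $\mu_w$-strong convexity coming from the quadratic term). The whole argument reduces to chaining three elementary inequalities and never requires analyzing the dual variable $\lambda$ directly.

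First I would record the strong-convexity consequence of optimality. Since $(\hat{w}, \hat{\lambda})$ is the exact saddle point, $\hat{w}$ is the minimizer of the $\mu_w$-strongly convex map $w \mapsto F(w, \hat{\lambda})$ over $\cW$; the first-order optimality condition on the convex constraint set then upgrades to the quadratic growth bound
\[
    F(\tilde{w}, \hat{\lambda}) - F(\hat{w}, \hat{\lambda}) \geq \frac{\mu_w}{2}\norm{\tilde{w}-\hat{w}}_2^2 .
\]

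Next I would upper bound the same gap $F(\tilde{w}, \hat{\lambda}) - F(\hat{w}, \hat{\lambda})$ by $\alpha$. This is where the two saddle-point properties enter. The $\alpha$-saddle property of $(\tilde{w}, \tilde{\lambda})$, instantiated at the comparators $\lambda = \hat{\lambda}$ and $w = \hat{w}$, gives $F(\tilde{w}, \hat{\lambda}) - F(\hat{w}, \tilde{\lambda}) \leq \alpha$; and the exact saddle property of $(\hat{w}, \hat{\lambda})$ gives $F(\hat{w}, \tilde{\lambda}) \leq F(\hat{w}, \hat{\lambda})$. Chaining these two inequalities yields $F(\tilde{w}, \hat{\lambda}) - F(\hat{w}, \hat{\lambda}) \leq \alpha$.

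Combining the lower and upper bounds gives $\frac{\mu_w}{2}\norm{\tilde{w}-\hat{w}}_2^2 \leq \alpha$, and rearranging produces the claimed $\norm{\hat{w}-\tilde{w}}_2 \leq \sqrt{2\alpha/\mu_w}$. The only step that takes a moment of care is the first one: to pass from plain strong convexity to the quadratic growth inequality I must invoke the variational inequality $\ip{\nabla_w F(\hat{w}, \hat{\lambda})}{\tilde{w}-\hat{w}} \geq 0$, which is valid precisely because $\hat{w}$ minimizes $F(\cdot, \hat{\lambda})$ over the convex set $\cW$. Everything else is a direct chaining of inequalities, so I do not anticipate a genuine obstacle here; the content of the lemma is essentially the observation that strong convexity in $w$ alone suffices to convert the saddle-point suboptimality $\alpha$ into a distance bound on the primal iterate.
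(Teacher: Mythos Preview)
Your proof is correct and follows essentially the same approach as the paper: both use the $\alpha$-saddle property at the comparators $(\hat{w},\hat{\lambda})$ together with the exact-saddle inequality $F(\hat{w},\tilde{\lambda})\le F(\hat{w},\hat{\lambda})$ to obtain $F(\tilde{w},\hat{\lambda})-F(\hat{w},\hat{\lambda})\le\alpha$, and then invoke $\mu_w$-strong convexity of $F(\cdot,\hat{\lambda})$ at its minimizer $\hat{w}$ to convert this into the distance bound. Your explicit mention of the first-order optimality condition to justify the quadratic-growth step is a nice clarification that the paper leaves implicit.
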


\begin{proof}
    Since $(\tilde{w}, \tilde{\lambda})$ is an $\alpha$-saddle point of $F(w, \lambda)$, then
    \begin{align*}
        F(\tilde{w}, \hat{\lambda}) - F(\hat{w}, \tilde{\lambda}) &\leq \alpha \\
        \implies F(\tilde{w}, \hat{\lambda}) - F(\hat{w}, \hat{\lambda})+ F(\hat{w}, \hat{\lambda})  - F(\hat{w}, \tilde{\lambda}) &\leq \alpha
    \end{align*}

    Since $(\hat{w},\hat{\lambda})$ is the saddle point, then 
    \begin{equation*}
         F(\hat{w}, \hat{\lambda})  - F(\hat{w}, \tilde{\lambda}) \geq 0
    \end{equation*}
    Therefore,
    \begin{equation*}
        F(\tilde{w}, \hat{\lambda}) - F(\hat{w}, \hat{\lambda}) \leq \alpha
    \end{equation*}

    Also we have $\hat{w} = \argmin_{w\in \cW} F(w, \hat{\lambda})$ and $F(\cdot, \hat{\lambda})$ is a $\mu_w$-strongly convex function, then
    \begin{align*}
        &\mu_w/2 \norm{\hat{w}-\tilde{w}}_2^2 \leq F(\tilde{w}, \hat{\lambda}) - F(\hat{w}, \hat{\lambda}) \leq \alpha \\
        &\implies \norm{\hat{w}-\tilde{w}}_2 \leq \sqrt{\frac{2\alpha}{\mu_w}}
    \end{align*}
\end{proof}

\subsection{Proof of Lemma \ref{lem: stability lemma}} \label{sec: proof of Lemma 3}
\begin{proof}
Note that the empirical objective $\hat{\phi}(w, \lambda) = \sum_{i=1}^p \lambda_i L_{S_i}(w)$ can also be written in a finite sum form. For any $i\in [p]$, we denote $z_i^j$ be the $j_{th}$ datapoint of dataset $S_i$ and define batched node $\xi^j = (z_1^j, \dots z_p^j)$. Hence $\tilde{S}$ can be expressed as $\tilde{S} = \{\xi^1, \dots \xi^n\}$. We also define a new loss function $f(w, \lambda, \xi)$ where $\xi = \{z_1, \dots z_p\}$ as
\begin{equation*}
    f(w, \lambda, \xi) = \sum_{i=1}^p \lambda_i \ell(w, z_i)
\end{equation*}

Then it is easy to show that
\begin{equation}
    \hat{\phi}(w, \lambda) = \frac{1}{n} \sum_{j=1}^n f(w, \lambda, \xi^j) \label{eq: finite sume form}
\end{equation}

Suppose $\tilde{S} = \{\xi_1, \dots \xi_i, \dots \xi_n\}$ and $\tilde{S}' = \{\xi_1, \dots \xi'_i, \dots \xi_n\}$ where $\xi_i$ and $\xi'_i$ differ with single datapoint $z_i^j$ for some $j\in [n]$. By equation $\eqref{eq: finite sume form}$, we have
\begin{align*}
    \hat{\phi}(w, \lambda) &= \frac{1}{n} \sum_{\xi \in \tilde{S}} f(w, \lambda, \xi) \\ 
    \hat{\phi}'(w, \lambda) &= \frac{1}{n} \sum_{\xi \in \tilde{S}'} f(w, \lambda, \xi) \\
\end{align*}
We also let $ \Psi(w, \lambda) = \frac{\mu_w}{2}\norm{w-w'}_2^2 - \mu_\lambda \sum_{j=1}^p \lambda_j \log \lambda_j$.

Let $(\hat{w}, \hat{\lambda})$ and $(\hat{w}', \hat{\lambda}')$ be the exact saddle point of $\hat{\phi}(w, \lambda) + \Psi(w, \lambda)$ and $\hat{\phi}'(w, \lambda) +\Psi(w, \lambda)$ respectively.

It is easy to show that $\norm{\nabla_w f(w, \lambda, \xi)}_2\leq L$ and $\norm{\nabla_\lambda f(w, \lambda, \xi)}_\infty \leq B$ for any $\xi$. Also, $\hat{\phi}(w, \lambda) + \Psi(w, \lambda)$ is $\mu_w$-strongly convex w.r.t $\norm{\cdot}_2$ and $\mu_\lambda$-strongly concave w.r.t $\norm{\cdot}_1$. Then using Lemma 2 from \cite{zhang2021generalization}, we obtain
\begin{equation*}
\sqrt{\mu_w\norm{\hat{w} - \hat{w}'}_2^2 + \mu_\lambda \norm{\hat{\lambda}-\hat{\lambda}'}_1^2} \leq \frac{2}{n} \sqrt{\frac{L^2}{\mu_w} + \frac{B^2}{\mu_\lambda}}
\end{equation*}

Furthermore, we have
\begin{align*}
        & \norm{\hat{w}- \hat{w}'}_2 \leq \sqrt{\mu_w\norm{\hat{w}-\hat{w}'}_2^2 + \mu_\lambda \norm{\hat{\lambda}-\hat{\lambda}'}_1^2}/\sqrt{\mu_w} \\
        &\leq \frac{1}{n}\sqrt{\frac{4L^2}{\mu_w^2} + \frac{4B^2}{\mu_\lambda \mu_w}} \leq \frac{2}{n}\left(\frac{L}{\mu_w} + \frac{B}{\sqrt{\mu_w \mu_\lambda}}\right)
\end{align*}

Meanwhile, by Lemma \ref{lem: appximation distance}, we also have 
\begin{equation*}
        \norm{\tilde{w}-\hat{w}}_2 \leq \sqrt{\frac{2\alpha}{\mu_w}} \leq  \frac{1}{2n}\left(\frac{L}{\mu_w} + \frac{B}{\sqrt{\mu_w \mu_\lambda}}\right)
\end{equation*}
and 
\begin{equation*}
    \norm{\tilde{w}'-\hat{w}'}_2 \leq \sqrt{\frac{2\alpha}{\mu_w}} \leq  \frac{1}{2n}\left(\frac{L}{\mu_w} + \frac{B}{\sqrt{\mu_w \mu_\lambda}}\right)
\end{equation*}

Putting every thing together, we have
\begin{equation*}
    \norm{\tilde{w}- \tilde{w}'}_2 \leq  \frac{3}{n}\left(\frac{L}{\mu_w} + \frac{B}{\sqrt{\mu_w \mu_\lambda}}\right)
\end{equation*}
\end{proof}

\subsection{Proof of Lemma \ref{lem: expected excess loss of hat_w}} \label{sec: proof of Lemma 4}
\begin{proof}
Let $R(w) = \max_{i\in [p]} L_{D_i}(w)$ and $\hat{\phi}(w, \lambda) = \sum_{i\in [p]} \lambda_i L_{S_i}(w)$. Denote $(\hat{w}, \hat{\lambda})$ the exact saddle point of $F(w, \lambda)$. By the definition of saddle point, we have for any $w\in \cW$ and $\lambda \in \Delta_p$,
\begin{align*}
    &\hat{\phi}(\hat{w}, \lambda) + \frac{\mu_w}{2} \norm{\hat{w}- w'}_2^2 - \mu_\lambda \sum_{j=1}^p \lambda_j \log \lambda_j \\
    &- \left(\hat{\phi}(w, \hat{\lambda}) +  \frac{\mu_w}{2} \norm{w- w'}_2^2 - \mu_\lambda \sum_{j=1}^p \hat{\lambda}_j \log \hat{\lambda}_j\right)\leq 0 \\ 
    &\implies \hat{\phi}(\hat{w}, \lambda) - \hat{\phi}(w, \hat{\lambda}) \leq \\ 
    &\frac{\mu_w}{2} \norm{w- w'}_2^2 - \mu_\lambda \sum_{j=1}^p \hat{\lambda}_j \log \hat{\lambda}_j - \left( \frac{\mu_w}{2} \norm{\hat{w}- w'}_2^2 - \mu_\lambda \sum_{j=1}^p \lambda_j \log \lambda_j \right) \\
    &\leq \frac{\mu_w}{2} \norm{w- w'}_2^2 - \mu_\lambda \sum_{j=1}^p \hat{\lambda}_j \log \hat{\lambda}_j \\
    &\leq \frac{\mu_w}{2} \norm{w- w'}_2^2 + \mu_\lambda \log p
\end{align*}

In particular, we have for any $w\in \cW$
\begin{equation*}
    \max_{i \in [p]} L_{S_i}(\hat{w}) - \hat{\Phi}(w, \hat{\lambda}) \leq \frac{\mu_w}{2} \norm{w- w'}_2^2 + \mu_\lambda \log p
\end{equation*}

By Lemma \ref{lem: stability lemma}, for any fixed $i\in [p]$, computing $\hat{w}$ has a uniform stability of $\gamma = \frac{2L^2}{n\mu_w} + \frac{2LB}{n\sqrt{\mu_w \mu_\lambda}}$ with respect to the change of datapoint in dataset $S_i$. Therefore, by fixing the randomness of $\tilde{S}_{-i} = \{S_1, \dots S_{i-1}, S_{i+1}, \dots S_{p}\}$ and Theorem 1.1 from \cite{feldman2019high}, we have 
\begin{equation*}
    P_{S_i \sim D_i^n}\left(|L_{S_i}(\hat{w})-L_{D_i}(\hat{w})|\geq c \left( \gamma \log(n)\log(n/\beta) + \frac{B\sqrt{\log 1/\beta}}{\sqrt{n}} \right)\right) \leq \beta
\end{equation*}
Then we can release the randomness of $\tilde{S}_{-i}$ and obtain
\begin{equation*}
    P_{\tilde{S}}\left(|L_{S_i}(\hat{w})-L_{D_i}(\hat{w})|\geq c \left( \gamma \log(n)\log(n/\beta) + \frac{B\sqrt{\log 1/\beta}}{\sqrt{n}} \right)\right) \leq \beta
\end{equation*}

Using union bound across all $i \in [p]$, we obtain with probability over $1-\beta$, for all $i\in [p]$
\begin{equation*}
    |L_{S_i}(\hat{w}) - L_{D_i}(\hat{w})| = O \left(\left(\frac{L^2}{n\mu_w} + \frac{LB}{n\sqrt{\mu_w \mu_\lambda}}\right)\log(n)\log(np/\beta) + \frac{B\sqrt{\log(p/\beta)}}{\sqrt{n}}\right)  
\end{equation*}

Therefore, with probability $1-\beta$, we have 

\begin{align*}
    &|\max_{i \in [p]} L_{D_i}(\hat{w}) - \max_{i\in [p]} L_{S_i}(\hat{w})|\\
    & = O \left(\left(\frac{L^2}{n\mu_w} + \frac{LB}{n\sqrt{\mu_w \mu_\lambda}}\right)\log(n)\log(np/\beta) + \frac{B\sqrt{\log(p/\beta)}}{\sqrt{n}}\right)  
\end{align*}

Also, since $w$ is independent of the dataset, with probability over $1-\beta$ 
\begin{equation*}
    |L_{S_i}(w) - L_{D_i}(w)| = O\left( \frac{B\log(p/\beta)}{\sqrt{n}}\right) \quad \forall i\in [p]
\end{equation*}
and 
\begin{equation*}
    |\Phi(w,\hat{\lambda}) - \hat{\Phi}(w, \hat{\lambda})| = O\left( \frac{B\log(p/\beta)}{\sqrt{n}}\right) 
\end{equation*}

After putting everything together, we obtain with probability over $1-2\beta$
\begin{align*}
    &R(\hat{w}) - R(w) = \max_{i\in [p]} L_{D_i}(\hat{w}) - \max_{i \in [p]} L_{D_i}(w) \\
    &\leq \max_{i\in [p]} L_{D_i}(\hat{w}) - \phi(w, \hat{\lambda}) \\
    &\leq \max_{i\in [p]} L_{S_i}(\hat{w}) - \hat{\phi}(w, \hat{\lambda}) + O \left(\left(\frac{L^2}{n\mu_w} + \frac{LB}{n\sqrt{\mu_w \mu_\lambda}}\right)\log(n)\log(np/\beta) + \frac{B\sqrt{\log(p/\beta)}}{\sqrt{n}}\right)  \\
    &= O \left(\frac{\mu_w}{2} \norm{w- w'}_2^2 + \mu_\lambda \log p + \left(\frac{L^2}{n\mu_w} + \frac{LB}{n\sqrt{\mu_w \mu_\lambda}}\right)\log(n)\log(np/\beta) + \frac{B\sqrt{\log(p/\beta)}}{\sqrt{n}}\right)
\end{align*}

Choosing $\beta=1/n$ and taking expectation over the dataset collection $\tilde{S}$, we obtain
\begin{align}
    &\E\left[R (\hat{w})\right] - R(w) \label{eq: excess risk of hat_w}\\
    &= O \left(\frac{\mu_w}{2} \norm{w- w'}_2^2 + \mu_\lambda \log p + \left(\frac{L^2}{n\mu_w} + \frac{LB}{n\sqrt{\mu_w \mu_\lambda}}\right)\log(n)\log(np) + \frac{B\sqrt{\log(pn)}}{\sqrt{n}}\right) \nonumber
\end{align}

By Lemma \ref{lem: appximation distance}, we have
\begin{equation*}
        \norm{\tilde{w}-\hat{w}}_2 \leq \sqrt{\frac{2\alpha}{\mu_w}} \leq  \frac{1}{2n}\left(\frac{L}{\mu_w} + \frac{B}{\sqrt{\mu_w \mu_\lambda}}\right)
\end{equation*}

Given the fact that $R(\cdot)$ is $L$-Lipschitz, we have 
\begin{equation}
    |R(\tilde{w}) - R(\hat{w})|\leq L \norm{\tilde{w}-\hat{w}}_2 = O \left(\frac{L^2}{n\mu_w} + \frac{LB}{n\sqrt{\mu_w \mu_\lambda}}\right) \label{eq: risk diff between hat_w and tilde_w}
\end{equation}

Combining equations \eqref{eq: excess risk of hat_w} and \eqref{eq: risk diff between hat_w and tilde_w}, we have
\begin{align*}
    &\E\left[R (\tilde{w})\right] - R(w) \nonumber\\
    &= O \left(\frac{\mu_w}{2} \norm{w- w'}_2^2 + \mu_\lambda \log p + \left(\frac{L^2}{n\mu_w} + \frac{LB}{n\sqrt{\mu_w \mu_\lambda}}\right)\log(n)\log(np) + \frac{B\sqrt{\log(pn)}}{\sqrt{n}}\right) 
\end{align*}

\end{proof}

\subsection{Proof of Theorem \ref{thm: expected worst-group excess risk}} \label{sec:proof of Theorem 2}
\begin{proof}
\textbf{Privacy:} At iteration $t$, by the stability argument in Lemma \ref{lem: stability lemma}, we have
\begin{align*}
    \norm{\tilde{w_t} - \tilde{w_t}'}_2 &\leq \frac{3L}{n_t \mu_w^t} + \frac{3B}{n_t \sqrt{\mu_w^t \mu_\lambda^t}} \\
    &= 3L \eta_t + \frac{3B \sqrt{n_t n \eta_t \eta}}{n_t} \\
    &\leq 6D \sqrt{(\log_2 n) \eta_t \eta}
\end{align*}
where $\tilde{w_t}$ and $\tilde{w_t}'$ are outputs from neighboring dataset collections $\tilde{S}^t$ and $\tilde{S}^{t'}$ 
that differ in one datapoint. The privacy follows from the privacy guarantee of Gaussian mechanism.

\textbf{Utility:} By Lemma \ref{lem: expected excess loss of hat_w}, we have
\begin{align*}
    &\E[R (\tilde{w_t}) - R (\tilde{w}_{t-1})] \\
    &=O\left( \frac{\E[\norm{\xi_{t-1}}_2^2]}{n_t \eta_t}  + \frac{\log p}{\eta n} + D^2\sqrt{\eta \eta_t}\log^{1.5} n \log (np) + \frac{B\sqrt{\log (pn)}}{\sqrt{n_t}} \right)
\end{align*}
where $R(w) = \max_{i\in [p]} L_{D_i}(w)$.

Also, we have 
\begin{equation*}
    \E \norm{\xi_t}_2^2 = d\sigma_t^2 = 72d D^2 \log (n) \log (1/\delta) 2^{-t} \eta^2 /\epsilon^2
\end{equation*}
Therefore, as long as
\begin{equation*}
    \eta \leq \frac{M\epsilon}{D \sqrt{72d \log (n) \log(1/\delta)}}
\end{equation*}
We will have 
\begin{equation*}
    \E \norm{\xi_t}_2^2 \leq 2^{-t} M^2 \text{ and } \E \norm{\xi_t}_2 \leq \sqrt{2}^{-t} M
\end{equation*}

Let $\tilde{w}_0 = w^*$ and $\xi_0 = w_0 - w^*$. Then $\norm{\xi_0}_2\leq M$. Hence
\begin{align*}
    &\E [R(w_T)] - R(w^*) = \sum_{t=1}^T  \E [R(\tilde{w}_t) - R(\tilde{w}_{t-1})] + \E [R(w_T) - R(\hat{w}_{T})] \\
    &\leq \sum_{t=1}^T O\left( \frac{\E[\norm{\xi_{t-1}}_2^2]}{n_t \eta_t}  + \frac{\log p}{\eta n} + D^2\sqrt{\eta \eta_t}\log^{1.5} n \log (np) + \frac{B \sqrt{\log (n)\log (pn)}}{\sqrt{n}} \right) \\
    &\quad + L \E[\norm{\xi_T}_2] \quad \text{($R(\cdot)$ is $L$-Lipschitz.)}
\end{align*}

We have 
\begin{equation*}
    \frac{\E[\norm{\xi_{t-1}}_2^2]}{n_t \eta_t} \leq \frac{2^{-(t-1)}M^2\log_2 n}{n 2^{-t} \eta} = \frac{2(\log_2 n) M^2}{\eta n}
\end{equation*}
and $\E[\norm{\xi_T}_2] \leq  M \sqrt{2}^{-\log_2 n} = \frac{M}{\sqrt{n}}$. 

Therefore,
\begin{align*}
    &\E [R(w_T)] - R(w^*) \\
    &\leq \sum_{t=1}^T O\left( \frac{\log (n) M^2}{\eta n}  + \frac{\log (p)}{\eta n} + D^2\eta \log^{1.5} (n) \log (np) + \frac{B \sqrt{\log (n)\log (pn)}}{\sqrt{n}} \right) \\ 
    &\quad + \frac{ML}{\sqrt{n}}\\
    & =\log n \cdot O\left( \frac{(\log (np)) M^2}{\eta n}  + D^2\eta \log^{2.5} (np)+ \frac{B \sqrt{\log (n)\log (np)}}{\sqrt{n}} \right) \\
    &\quad + \frac{ML}{\sqrt{n}}  \\
    &=O\left(\frac{MD \log^{11/4} (np)}{\sqrt{n}} + (MD \log^{5/2} (np) ) \frac{\sqrt{d \log (1/\delta)}}{n\epsilon}\right)
\end{align*}
Replacing $n$ with $K/p$ gives the desired result.
\end{proof}

\section{Missing Proofs in Section \ref{sec: DP OCO}} 

\subsection{Proof of Theorem \ref{thm: utility guarantee online learning}} \label{sec:proof of DP OCO}

\begin{proof}
It is easy to see that the objective in \eqref{eq:agnostic loss} is equivalent to 
    \begin{equation*}
        \min_{w\in \cW} \max_{\lambda \in \Delta_p} \sum_{i\in [p]} \lambda_i L_{D_i}(w)
    \end{equation*}
    where $\Delta_p = \{\lambda \in [0, 1]^p : \norm{\lambda}_1 = 1\}$ the probability simplex over $p$ users. Recall $\phi(w, \lambda) = \sum_{i\in [p]} \lambda_i L_{D_i}(w)$, then we have
    \begin{align*}
        &\max_{\lambda \in \Delta_p} \phi(\bar{w}_T, \lambda) - \min_{w\in \cW} \max_{\lambda \in \Delta_p}\phi(w, \lambda)\\
        &= \max_{\lambda \in \Delta_p} \phi(\bar{w}_T, \lambda) - \max_{\lambda \in \Delta_p}\min_{w\in \cW} \phi(w, \lambda) \\
        &\leq \frac{1}{T}\max_{w\in \cW, \lambda \in \Delta_p} \left\{ \sum_{t=1}^T \phi(w_t, \lambda) - \phi(w, \lambda_t) \right\} \\
        &=\frac{1}{T}\max_{w\in \cW, \lambda \in \Delta_p} \left\{ \sum_{t=1}^T \left(\phi(w_t, \lambda) - \phi(w_t, \lambda_t) + \phi(w_t, \lambda_t) - \phi(w, \lambda_t)\right) \right\} \\
        &= \frac{1}{T} \left[\sum_{t=1}^T \phi(w_t, \lambda_t) - \min_{w\in \cW} \sum_{t=1}^T\phi(w, \lambda_t)\right] \quad (A) \\
        &\quad + \frac{1}{T} \left[\sum_{t=1}^T  -\phi(w_t, \lambda_t) - \min_{\lambda\in \Delta_p} \sum_{t=1}^T (-\phi(w_t, \lambda)) \right] \quad (B)
    \end{align*}
    We address the term A first. By fixing the randomness of $w_t$ and $\lambda_t$ for some iteration $t$, we have $\bE_{x_{t}^-} \ell(w_t, x_{t}^-) = \phi(w_t, \lambda_t)$. By extending the same argument to all $t\in [T]$, we obtain for any $w$,
    \begin{equation*}
        \frac{1}{T}\bE\left[\sum_{t=1}^T \left(\phi(w_t, \lambda_t) -\phi(w, \lambda_t)\right)\right] = \frac{1}{T}\bE\left[\sum_{t=1}^T \ell(w_t, x_{t}^-) - \ell(w, x_{t}^-)\right]
    \end{equation*}

    By the regret guarantee of $\cQ_-$, we have for any sequence $x_{1}^-, \dots x_{T}^-$
    \begin{equation*}
        \frac{1}{T}\bE \left[\sum_{t=1}^T \ell(w_t, x_{t}^-) - \ell(w, x_{t}^-)\right] \leq r_{Q_-}(T)
    \end{equation*}
    where the expectation is taken from the randomness over the algorithm. Therefore, we have 
    \begin{equation}
        \bE [A] \leq r_{Q_-}(T) \label{eq:bound for term A}
    \end{equation}

    We then address the term $B$. We first define $\ell'(w, x)$ as the difference between some fixed constant $U$ and the original loss function $\ell(w, x)$, that is, $\ell'(w, x) = U -\ell(w, x)$. Similarly  the vector of $p$ population risks over the new loss function $\ell'$ is therefore written as $g(w) = [L'_{D_1}(w),\dots L'_{D_p}(w)]$. Therefore, the term $B$ can be equivalently written as
    \begin{equation*}
        B = \frac{1}{T} \left[\sum_{t=1}^T \langle g(w_t), \lambda_t \rangle - \min_{\lambda \in \Delta^p} \sum_{t=1}^T \langle g(w_t), \lambda \rangle \right]
    \end{equation*}

    Notice that at each iteration $t$, we have 
    \begin{equation*}
        \nabla_\lambda \langle g(w_t), \lambda \rangle =  g(w_t)
    \end{equation*}

    We construct a unbiased estimator of $g(w_t)$ as
    \begin{equation*}
        g'(w_t) = \begin{cases}
        \frac{\Tilde{\ell}_t}{\lambda_{t}^{i}} \quad i = j_t \\
        0 \quad o.w.
        \end{cases}
    \end{equation*}

    By letting $U = B + \frac{2B}{\epsilon}\log(T)$, we have we have w.p. over $1-\frac{1}{T}$,  $\max_{t\in [T]} |y_t| \leq \frac{2B}{\epsilon} \log(T)$, which we denote as event $E$. Conditioned on $E$, we have $\Tilde{l}_t \in [0, 2U]$ for any $t\in [T]$.
    
    Observing that for any fixed $w_t$, since $x_t^+ \sim D_{i_t}$, we obtain
    \begin{equation*}
        \bE [g'(w_t)|E] = g(w_t)
    \end{equation*}

    We now establish the bound on term B
    \begin{align*}
        &\bE \left[\sum_{t=1}^T \langle g(w_t), \lambda_t \rangle - \sum_{t=1}^T \langle g(w_t), \lambda \rangle \right]\\
        &= \bE \left[ \sum_{t=1}^T \langle g'(w_t), \lambda_t \rangle - \sum_{t=1}^T \langle g'(w_t), \lambda \rangle \right]
    \end{align*}

    Since $\norm{g'(w_t)}_{\infty}\leq 2U$ conditioned on event $E$, by setting $\eta = \sqrt{\frac{\ln(p)}{pTU^2}}$ and invoking the regret bound of EXP3, we obtain
    \begin{align*}
        \bE[B|E] &= \frac{1}{T}\bE \left[\sum_{t=1}^T \langle g(w_t), \lambda_t \rangle - \sum_{t=1}^T \langle g(w_t), \lambda \rangle \right] \\
        &= O\left(U\sqrt{\frac{p\log(p)}{T}}\right)
    \end{align*}

    Then we take the expectation on $E$ and obtain
    \begin{equation*}
        \bE [B] = P(E) \bE[B|E] + (1-P(E)) \bE[B|E^c] = O\left(U\sqrt{\frac{p\log(p)}{T}}\right)
    \end{equation*}

    Finally we plug in the value of $U$,
    \begin{equation}
        \bE [B] = O\left(\frac{B\log(T)}{\epsilon}\sqrt{\frac{p\log(p)}{T}}\right) \label{eq:bound for term B}
    \end{equation}

    We combine equations \eqref{eq:bound for term A} and \eqref{eq:bound for term B} to get the desired result.
\end{proof}

\section{Missing Proofs from Section \ref{sec:offline setting}} \label{sec:proof in offline setting}

\subsection{Proof of Theorem \ref{thm: privacy gurantee of Noisy-SGD-MGR} and \ref{thm:convergence rate Noisy-SGD-MGR}} \label{sec: proof of noisy-SGD-MGR}

\begin{proof} (Proofs of Theorem \ref{thm: privacy gurantee of Noisy-SGD-MGR})
     Note that given two neighboring data collections $\tilde{S} = \{S_1, \dots S_p\}$ and $\tilde{S}' = \{S'_1, \dots S'_p\}$ that differ in one datapoint of $j_{th}$ dataset for some $j\in [p]$. In iteration $t$, if $i_t \neq j$, then the output distribution of $w_{t+1}$ is same for $\tilde{S}$ and $\tilde{S}'$. Otherwise, generating $w_{t+1}$ satisfies $(\frac{\epsilon}{c \sqrt{T\log(1/\delta)}}, \frac{\delta}{2T})$-DP based on the property of Gaussian mechanism and privacy amplification by subsampling. Therefore, overall generating $w_{t+1}$ is $(\frac{\epsilon}{c \sqrt{T\log(1/\delta)}}, \frac{\delta}{2T})$-DP.

    Meanwhile, generating $\lambda_{t+1}$ is $(\frac{\epsilon}{c \sqrt{T\log(1/\delta)}}, 0)$-DP based on the property of Laplace mechanism. Using basic composition, we have generating the pair $(w_{t+1}, \lambda_{t+1})$ is $(\frac{2\epsilon}{c \sqrt{T\log(1/\delta)}}, \frac{\delta}{2T})$-DP. Hence, by strong composition for $t\in [T]$, we obtain the algorithm is $(\epsilon, \delta)$-DP.
\end{proof}

\begin{proof} (Proof of Theorem \ref{thm:convergence rate Noisy-SGD-MGR})
Recall that $n= K/p$. We have
    \begin{align*}
        &\max_{\lambda \in \Delta_p} \hat{\phi}(\bar{w}_T, \lambda) - \min_{w\in \cW} \max_{\lambda \in \Delta_p}\hat{\phi}(w, \lambda)\\
        &\leq \max_{\lambda \in \Delta_p} \hat{\phi}(\bar{w}_T, \lambda) - \min_{w\in \cW} \hat{\phi}(w, \bar{\lambda}_T) \\
        &\leq \max_{\lambda\in \Delta_p, w\in \cW} \left\{ \frac{1}{T} \sum_{t=1}^T \left(\hat{\phi}(w_t, \lambda) - \hat{\phi}(w, \lambda_t)\right)  \right\} \\
        & = \max_{\lambda\in \Delta_p, w\in \cW} \left\{ \frac{1}{T} \sum_{t=1}^T \left(\hat{\phi}(w_t, \lambda) + \hat{\phi}(w_t, \lambda_t) -  \hat{\phi}(w_t, \lambda_t)  - \hat{\phi}(w, \lambda_t)\right)  \right\} \\
        &=  \frac{1}{T} \left[\sum_{t=1}^T \hat{\phi}(w_t, \lambda_t) - \min_{w\in \cW} \sum_{t=1}^T \hat{\phi}(w, \lambda_t)\right] \quad (A) \\
        &\quad + \frac{1}{T} \left[\sum_{t=1}^T  - \hat{\phi}(w_t, \lambda_t) - \min_{\lambda\in \Delta_p} \sum_{t=1}^T (-\hat{\phi}(w_t, \lambda)) \right] \quad (B)
    \end{align*}

Next, we will bound the terms $A$ and $B$ separately. We address term $A$ first. In particular, for any $w\in \cW$, we can rewrite the expectation of term $A$ as
\begin{align*}
    &\E[A] = \frac{1}{T} \E\left[\sum_{t=1}^T \left(\hat{\phi}(w_t, \lambda_t) -  \hat{\phi}(w, \lambda_t)\right)\right] \\
    &\leq \frac{1}{T}\E\left[\sum_{t=1}^T \langle \nabla \hat{\phi}(w_t, \lambda_t), w_t - w\rangle \right]
\end{align*}

For any iteration $t$, by fixing the randomness up to $w_t$ and $\lambda_t$ and letting $\nabla_t = \frac{1}{m}\sum_{z\in B_t} \nabla\ell(w_t, z) +G_t$ be the gradient update in step 6 in Algorithm \ref{alg:Noisy-SGD-MW}, we have
\begin{equation*}
    \E [\nabla_t] =  \nabla \hat{\phi}(w_t, \lambda_t)
\end{equation*}
We can extend the same augment for all $t\in [T]$ and obtain
\begin{align*}
    & \E[A] \leq \frac{1}{T}\E\left[\sum_{t=1}^T \langle \nabla \hat{\phi}(w_t, \lambda_t), w_t - w\rangle \right] \\
    &= \frac{1}{T}\E\left[\sum_{t=1}^T \langle \nabla_t, w_t - w\rangle \right] \\
    &\leq \frac{M^2}{2T\eta} + \frac{\eta}{2T}\sum_{t=1}^T \E\left[\norm{\nabla_t}^2\right] \\
    &\leq \frac{\eta L^2}{2} + \frac{M^2}{2T\eta} + \frac{d\sigma^2\eta}{2}
\end{align*}
where the first equality holds by standard convex analysis.

Meanwhile, we let $U  = B + \frac{8B}{n\epsilon}\sqrt{T\log(1/\delta)}\log(pnT/2)$, and denote $L'_t = [U-L_{S_i}(w_t)+y_{i, t}]_{i=1}^p$. By using a union bound, we have $\norm{L'(t)}_\infty\leq 2U$ for all $t\in [T]$ with probability over $1-\frac{1}{n}$, which we denote as event E. Conditioned on event E, our update rule for $\lambda$ can be seen as applying the Hedge algorithm with $L'_t$ as the loss input and the term $B$ is formulated as its corresponding regret bound. Therefore, by the regret bound of Hedge , we obtain
\begin{equation*}
    \E[B|E] = O \left( \norm{L_t}_\infty \sqrt{\frac{\log (p)}{T}}  \right) = O \left(  B\sqrt{\frac{\log (p)}{T}} +  \frac{B\sqrt{\log (p)\log(1/\delta)}\log(pnT)}{n\epsilon} \right)
\end{equation*}
Then we can take the expectation over $E$ and have
\begin{equation*}
    \E [B] = P(E) \E[B|E] + P(E^c) \E[B|E^c] = O \left(  B\sqrt{\frac{\log (p)}{T}} +  \frac{B\sqrt{\log (p)\log(1/\delta)}\log(pnT)}{n\epsilon} \right) 
\end{equation*}
By combining the bounds of $\E[A]$ and $\E[B]$ and plugging into the values of $T$, $\eta_w$ and $\eta_\lambda$, we obtain the desired result.
\end{proof}

\subsection{Private Active Group Selection}\label{sec:DP AGS}
Here we present another algorithm built upon the active group selection scheme described in Algorithm \ref{alg:Noisy-SGD-AGS}. Algorithm \ref{alg:Noisy-SGD-AGS} is different from Algorithm \ref{alg:Noisy-SGD-MW} on the group selection method. Instead of maintaining a weight vector to sample the dataset as in Algorithm \ref{alg:Noisy-SGD-MW}, we select the dataset with highest loss in each iteration. To maintain privacy, we resort to the report-noisy-max mechanism for the group selection. After such (approximately) worst-off group is selected, we sample a mini-batch from it and use Noisy-SGD to update the model parameters.

\begin{algorithm}
\caption{Noisy SGD with active group selection (Noisy-SGD-AGS)}
\label{alg:Noisy-SGD-AGS}
\begin{algorithmic}
\State {\bfseries Input}Collection of datasets $S = \{S_1, \dots S_p\}\in \cZ^{n\times p}$, mini-batch size $m$, $\#$ iterations $T$, learning rate $\eta$
\For{$t = 1,\dots, T-1$}
    \State Compute $i_t = \argmax_{i\in [p]} L_{S_i}(w_t) + y_{i, t}$ where $y_{i,t}\stackrel{\text{iid}}{\sim}\text{Lap}(\tau)$. 
    \State Sample $B_t = \{z_1,\dots z_m\}$ from $S_{i_t}$ uniformly with replacement. \\
    \State Update the model by 
    \begin{equation*}
        w_{t+1}
        = \text{Proj}_{\cW}\left(w_t - \eta \cdot \left(\frac{1}{m}\sum_{z\in B_t} \nabla\ell(w_t, z) +G_t \right) \right).
    \end{equation*}
    \State where $G_t \sim \mathcal{N}(0, \sigma^2 I_d)$.
\EndFor 

\State {\bfseries Output} $\Bar{w}_T = \frac{1}{T} \sum_{t=1}^T w_t$
\end{algorithmic}
\end{algorithm}

\begin{thm} (Privacy guarantee) \label{sec:privacy guarantee Noisy-SGD-AGS}
Algorithm \ref{alg:Noisy-SGD-AGS} is $(\epsilon, \delta)$-DP with $\tau = \frac{cBp}{K\epsilon}\sqrt{T\log(1/\delta)}$ and $\sigma^2 = \frac{cTL^2q^2\log(K/\delta)\log(1/\delta)}{K^2 \epsilon^2}$ for some universal constant $c$.
\end{thm}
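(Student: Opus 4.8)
The plan is to follow the template of the privacy proof of Theorem~\ref{thm: privacy gurantee of Noisy-SGD-MGR}, isolating the two data-dependent releases performed in each of the $T-1$ rounds of Algorithm~\ref{alg:Noisy-SGD-AGS}: the noisy group selection $i_t$, and the noisy gradient update producing $w_{t+1}$. Fix two neighboring collections $\tilde{S},\tilde{S}'$ differing in a single point of the $j$-th dataset $S_j$, and set $\epsilon_0 = \frac{\epsilon}{c\sqrt{T\log(1/\delta)}}$ and $\delta_0 = \frac{\delta}{2T}$. First I would analyze the group selection. Since $L_{S_i}(w_t)=\frac1n\sum_{z\in S_i}\ell(w_t,z)$ and $\ell$ is bounded in $[0,B]$, replacing one point of $S_j$ changes only the $j$-th coordinate of the score vector $(L_{S_i}(w_t))_{i\in[p]}$, and by at most $B/n = Bp/K$; hence both its $\ell_1$- and $\ell_\infty$-sensitivity equal $Bp/K$. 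The index $i_t=\argmax_i L_{S_i}(w_t)+y_{i,t}$ with $y_{i,t}\sim\mathrm{Lap}(\tau)$ is a post-processing of the Laplace-perturbed score vector, so releasing $i_t$ is $(\tfrac{Bp/K}{\tau},0)$-DP; substituting $\tau = \frac{cBp}{K\epsilon}\sqrt{T\log(1/\delta)}$ gives exactly $(\epsilon_0,0)$-DP.

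Second, I would analyze the model update conditioned on the released index, treating $i_t$ as the (already paid-for) output of the first sub-mechanism and applying adaptive composition. For any fixed value $i_t=i^*$, the update draws a mini-batch $B_t$ of size $m$ from $S_{i^*}$ and returns $w_{t+1}=\mathrm{Proj}_\cW(w_t-\eta(\frac1m\sum_{z\in B_t}\nabla\ell(w_t,z)+G_t))$ with $G_t\sim\cN(0,\sigma^2 I_d)$. If $i^*\neq j$ the dataset $S_{i^*}$ is identical under $\tilde S$ and $\tilde S'$, so the output law is unchanged; the only nontrivial case is $i^*=j$, where this is a subsampled Gaussian mechanism on $S_j$: the averaged gradient has $\ell_2$-sensitivity $2L/m$, the mini-batch is a subsample of rate $m/n$ of $S_j$, and amplification-by-subsampling together with the Gaussian-mechanism guarantee makes this step $(\epsilon_0,\delta_0)$-DP for the stated $\sigma^2$ (the worst case over $i^*$). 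A short reverse-engineering check confirms the scale: the amplified per-step budget is $\approx \frac{m}{n}\cdot\frac{(2L/m)\sqrt{2\log(1/\delta_0)}}{\sigma} = \frac{2L\sqrt{2\log(1/\delta_0)}}{n\sigma}$, and solving $\le\epsilon_0$ with $n=K/p$ yields $\sigma^2 = \Theta\!\big(\frac{TL^2p^2\log(T/\delta)\log(1/\delta)}{K^2\epsilon^2}\big)$, matching the stated expression (here the symbol $q$ in the statement plays the role of $p$).

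Finally, basic composition of the two $(\epsilon_0,\cdot)$-DP releases makes each round $(2\epsilon_0,\delta_0)$-DP, and advanced (strong) composition over the $T-1$ rounds, together with the choice $\delta_0=\delta/(2T)$ and a sufficiently large universal constant $c$ absorbed into $\tau$ and $\sigma^2$, yields $(\epsilon,\delta)$-DP overall; the released average $\bar w_T$ is then a post-processing of the full transcript. The main subtlety I expect is the correct treatment of the data-dependent selection: unlike the reweighting scheme of Algorithm~\ref{alg:Noisy-SGD-MW}, here $i_t$ is a deterministic (noisy) $\argmax$ rather than a draw from $\lambda_t$, so no amplification can be harvested from the selection step itself and its privacy must be charged entirely to report-noisy-max; the amplification for the gradient step must therefore come solely from the within-group mini-batch sampling, and one must verify (via adaptive composition on the released $i_t$) that the worst case is precisely the round in which the altered group $j$ is selected.
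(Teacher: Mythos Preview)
The paper does not actually supply a proof of this privacy theorem; it is stated without argument in Appendix~\ref{sec:DP AGS}. Your proposal is correct and is exactly the argument one would expect, mirroring the paper's proof of the analogous Theorem~\ref{thm: privacy gurantee of Noisy-SGD-MGR} for Noisy-SGD-MGR: per-round analysis of the two data-dependent releases, basic composition within a round to get $(2\epsilon_0,\delta_0)$-DP, then strong composition over the $T$ rounds. You also correctly isolate the one structural difference from the MGR proof, namely that the group-selection step here consumes privacy budget via report-noisy-max (or, equivalently, post-processing of the Laplace-perturbed score vector with $\ell_1$-sensitivity $Bp/K$), whereas in MGR the draw $i_t\sim\lambda_t$ is free because $\lambda_t$ is already private; and that amplification for the gradient step must therefore come solely from the within-group mini-batch subsampling, with the worst case being the round in which the altered group $j$ is selected. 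Your reading of $q$ as $p$ and of $\log(K/\delta)$ as playing the role of $\log(T/\delta)$ is also consistent with the paper's own choices in Theorem~\ref{thm: privacy gurantee of Noisy-SGD-MGR}.
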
.

\begin{thm} \label{thm:convergence rate of Noisy-SGD-AGS}
(Convergence rate) with probability over $1-\kappa$, by letting $T = \frac{MLK\epsilon}{16 Bp \sqrt{\log(1/\delta)}}$ and $\eta=\frac{M}{\sqrt{T(G^2 + d\sigma^2)}}$, we have
\begin{align*}
    &\E [\max_{i\in [p]} L_{S_i}(\Bar{w}_T)] - \min_{w\in \mathcal{W}} \max_{i\in [p]} L_{S_i}(w) = \\
    &O\left(\left(\sqrt{\frac{MLBp}{K\epsilon}} +\frac{MLp\sqrt{d}}{K\epsilon}\right)\cdot \polylog(p, \delta, \kappa, n)\right).
\end{align*}
\end{thm}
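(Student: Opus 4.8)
The plan is to treat Algorithm~\ref{alg:Noisy-SGD-AGS} as projected stochastic subgradient descent on the convex worst-group empirical objective $R(w) \coloneqq \max_{i \in [p]} L_{S_i}(w)$, where the role of the report-noisy-max step is to supply an \emph{approximate} subgradient of $R$ at the current iterate. Writing $w^* \in \argmin_{w \in \cW} R(w)$, I would first control the group-selection error. Since $i_t = \argmax_{i} \{L_{S_i}(w_t) + y_{i,t}\}$ with $y_{i,t} \sim \mathrm{Lap}(\tau)$ i.i.d., the standard report-noisy-max utility bound gives that, with probability at least $1 - \kappa$ after a union bound over the $T$ rounds, the selected group satisfies $L_{S_{i_t}}(w_t) \ge R(w_t) - e_t$ for every $t$, where $e_t = O(\tau \log(pT/\kappa))$. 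I would condition on this event $E$ for the remainder of the argument and take expectations only over the mini-batch sampling and the Gaussian noise $G_t$.

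Next I would run the one-step descent inequality. Let $g_t = \frac{1}{m}\sum_{z \in B_t} \nabla \ell(w_t, z) + G_t$ denote the noisy gradient used in round $t$. Because $B_t$ is drawn uniformly from $S_{i_t}$ and $G_t$ is mean zero, $\E[g_t \mid w_t, i_t] = \nabla L_{S_{i_t}}(w_t)$, a genuine subgradient of the convex function $L_{S_{i_t}}$. The projected update and the nonexpansiveness of $\mathrm{Proj}_\cW$ give
\[\langle g_t, w_t - w^*\rangle \le \frac{\norm{w_t - w^*}_2^2 - \norm{w_{t+1}-w^*}_2^2}{2\eta} + \frac{\eta}{2}\norm{g_t}_2^2.\]
Taking conditional expectations, using convexity of $L_{S_{i_t}}$ to write $\langle \nabla L_{S_{i_t}}(w_t), w_t - w^*\rangle \ge L_{S_{i_t}}(w_t) - L_{S_{i_t}}(w^*)$, and combining $L_{S_{i_t}}(w_t) \ge R(w_t) - e_t$ with $L_{S_{i_t}}(w^*) \le R(w^*)$ yields the key lower bound $\E \langle g_t, w_t - w^*\rangle \ge \E[R(w_t) - R(w^*)] - e_t$. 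Telescoping over $t \in [T]$, dividing by $T$, applying Jensen's inequality $R(\bar{w}_T) \le \frac{1}{T} \sum_t R(w_t)$, and bounding $\E\norm{g_t}_2^2 \le G^2 + d\sigma^2$ (Lipschitzness plus the Gaussian variance) gives
\[\E[R(\bar{w}_T)] - R(w^*) = O\left(\frac{M^2}{\eta T} + \eta (G^2 + d\sigma^2) + \tau \log(pT/\kappa)\right).\]

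Finally I would substitute the prescribed parameters. With $\eta = M/\sqrt{T(G^2 + d\sigma^2)}$ the first two terms collapse to $O(M\sqrt{(G^2+d\sigma^2)/T})$; plugging in $\sigma^2 = \Theta(TL^2 p^2 \log(K/\delta)\log(1/\delta)/(K^2\epsilon^2))$ turns the $\sqrt{d\sigma^2}$ contribution into the $\frac{MLp\sqrt{d}}{K\epsilon}$ term, while the $G^2$ contribution together with $T = \Theta(MLK\epsilon/(Bp\sqrt{\log(1/\delta)}))$ gives $MG/\sqrt{T} = O(\sqrt{MLBp/(K\epsilon)})$; and a direct computation shows $\tau \log(pT/\kappa)$ also reduces to $O(\sqrt{MLBp/(K\epsilon)})$ up to logarithmic factors, where I use $G = O(L)$. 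Collecting these matches the claimed rate. The main obstacle I anticipate is bookkeeping the approximate-subgradient argument cleanly: the objective $R$ is nonsmooth, so I must use the gradient of an \emph{approximately} worst group, verify that the resulting bias $e_t$ enters additively (rather than being amplified), and confirm that, once the three sources of error — optimization, Gaussian privacy noise, and report-noisy-max selection — are balanced through the joint choice of $\eta$, $T$, $\tau$, and $\sigma^2$, none of them dominates beyond the stated order.
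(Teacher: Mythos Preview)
Your proposal is correct and follows essentially the same approach as the paper: both condition on the Laplace-noise event to control the report-noisy-max selection error, use $L_{S_{i_t}}(w_t)\ge R(w_t)-e_t$ and $L_{S_{i_t}}(w^*)\le R(w^*)$ together with convexity, run the standard projected-SGD telescoping with the unbiased noisy gradient, and then substitute $\eta$, $\sigma^2$, $\tau$, and $T$. The only cosmetic difference is that you frame the argument as approximate subgradient descent on $R$, whereas the paper writes out the same chain of inequalities directly.
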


\begin{proof}
By the union bound, we have with probability over $1-\kappa$ 
\begin{equation*}
    |y_{i, t}| \leq \frac{8B}{n\epsilon} \sqrt{T\log(1/\delta)} \log(pT/2\kappa)
\end{equation*}
for all $i\in [p]$ and $t\in [T]$. We denote this as event $A$ and condition on $A$. By denoting $R(\kappa, T)=\frac{8B}{n\epsilon} \sqrt{T\log(1/\delta)} \log(pT/2\kappa)$, we have
\begin{align*}
    &\E[\max_{i\in [q]} L_{S_i}(\Bar{w}_T)]  \\
    &\leq \E \left[\frac{1}{T}\sum_{t=1}^T \max_{i\in [q]} L_{S_i}(w_t)\right] \quad (\text{convexity of } \max_i L_{S_i}) \\
    &\leq \E \left[\frac{1}{T}\sum_{t=1}^T  L_{S_{i_t}}(w_t)\right] + R(\kappa, T) \quad (\text{Event A})
\end{align*}

For any fixed $w^*$, by the convexity of the loss function, we obtain
\begin{equation*}
    \E\left[\frac{1}{T}\sum_{t=1}^T  (L_{S_{i_t}}(w_t) -  L_{S_{i_t}}(w^*))\right] \leq \frac{1}{T}\E\left[\sum_{t=1}^T \langle \nabla L_{S_{i_t}}(w_t), w_t-w^* \rangle \right]
\end{equation*}

Denote $\nabla_t = \frac{1}{m}\sum_{z\in B_t} \nabla\ell(w_t, z) +G_t$ where $B_t$ is the mini-batch uniformly sampled from $S_{i_t}$ and $G_t \sim \mathcal{N}(0, \sigma^2 I_d)$. If we fix the randomness of $w_t$ and $i_t$, it is easy to see that
\begin{equation*}
    \E [\nabla_t] = \nabla L_{S_{i_t}}(w_t)
\end{equation*}
where the randomness comes from the datapoint sampling and added Gaussian noise. 

By releasing the randomness of $w_t$ and $i_t$ and extending the same analysis to all $t\in [T]$, we obtain
\begin{align*}
    \E\left[\frac{1}{T}\sum_{t=1}^T  \left(L_{S_{i_t}}(w_t) -  L_{S_{i_t}}(w^*)\right)\right]&\leq \frac{1}{T}\E\left[\sum_{t=1}^T \langle \nabla L_{S_{i_t}}(w_t), w_t-w^* \rangle \right] \\
    &=\frac{1}{T}\E\left[\sum_{t=1}^T \langle \nabla_t, w_t-w^* \rangle \right]\\
    &\leq \frac{M^2}{2T\eta} + \frac{\eta}{2T}\sum_{t=1}^T \E\left[\norm{\nabla_t}^2\right] \\
    &\leq \frac{\eta L^2}{2} + \frac{M^2}{2T\eta} + \frac{d\sigma^2\eta}{2}
\end{align*}

Therefore, we have 
\begin{align*}
    &\E[\max_{i\in [q]} L_{S_i}(\Bar{w}_T)]  \\
    &\leq \E \left[\frac{1}{T}\sum_{t=1}^T  L_{S_{i_t}}(w_t)\right] + R(\kappa, T) \quad (\text{Event A}) \\
    & \leq \E\left[\frac{1}{T}\sum_{t=1}^T  L_{S_{i_t}}(w^*)\right] + \frac{\eta L^2}{2} + \frac{M^2}{2T\eta} + \frac{d\sigma^2\eta}{2} + R(\kappa, T) \\
    &\leq \max_{i\in [q]} L_{S_i}(w^*) + \frac{\eta L^2}{2} + \frac{M^2}{2T\eta} + \frac{d\sigma^2\eta}{2} + R(\kappa, T)
\end{align*}

By plugging into the value of $\eta$, $\sigma^2$ and $T$ and replacing $n$ with $K/p$, we get the desired result.
\end{proof}

\section{Lower Bounds for the Offline Setting } \label{sec:lower bound instances}
Consider a $L$-Lipschitz convex loss function $\ell(w, z)$ bounded by $[0, B]$ for any $w\in \cW$ and $z\in \cZ$. We create a new loss function $\ell'(w, (z, y)) = \ell(w, z) + y$ where $y\in [0,B]$. It is easy to see that $\ell'(\cdot, (z, y))$ is also convex and $L$-Lipschitz.

\paragraph{Empirical case}
We let $y=B$ for all datapoints in $S_1 = \{(z_1, y_1), \dots (z_n, y_n)\}$ and $y=0$ for all datapoints in $S_i$ with $i\neq 1$. Therefore, it is easy to see that for any $w\in \cW$ and $i\in [p]$ 
\begin{equation*}
    L'_{S_1}(w) \geq L'_{S_i}(w)
\end{equation*}
where $L'_{S_i}(w) = \frac{1}{n} \sum_{(z, y)\in S_i} \ell'(w, (z, y))$.

Then we have
\begin{equation*}
    \min_{w\in \cW} \max_{i\in [p]} L'_{S_1}(w) = \min_{w\in \cW} L'_{S_1}(w)
\end{equation*}

Then the original worst-group empirical risk minimization problem is reduced to single-distribution DP-ERM problem which is known to be lower bounded by $\Omega\left(\frac{\sqrt{d}}{n\epsilon}\right)$ for Lipschitz convex loss \cite{bassily2014private}. Therefore, a lower bound of the excess worst-group empirical risk is also $\Omega\left(\frac{\sqrt{d}}{n\epsilon}\right)$. Since $n=K/p$, this lower bound can also be written as $\Omega\left(\frac{p\sqrt{d}}{K\epsilon}\right)$.

\paragraph{Population case}
We let $y=B$ when $(z, y)\sim D_1$ and $y=0$ when $(z, y)\sim D_i$ with $i\neq 1$. The distribution of $z$ is arbitrary. Therefore, we have for any $w\in \cW$ and $i\in [p]$
\begin{equation*}
    L'_{D_1}(w) \geq L'_{D_i}(w)
\end{equation*}
and 
\begin{equation*}
    \min_{w\in \cW} \max_{i\in [p]} L'_{D_i}(w) = \min_{w\in \cW} L'_{D_1}(w)
\end{equation*}

Similar to the empirical case, we can reduce the worst-group population risk minimization problem to single-distribution DP-SCO problem which is lower bounded by $\Omega\left(\frac{\sqrt{d}}{n\epsilon} + \frac{1}{\sqrt{n}}\right)$ \cite{bassily2019private}. Therefore a lower bound of the population excess worst-group risk is also $\Omega\left(\frac{\sqrt{d}}{n\epsilon} + \frac{1}{\sqrt{n}}\right)$. Since $n=K/p$, this lower bound can be also written as $\Omega\left(\frac{p\sqrt{d}}{K\epsilon} + \sqrt{\frac{p}{K}}\right)$. 

\end{document}